\def\mul#1#2{\color{#1}\underline{{\color{black}#2}}\color{black}}
\definecolor{dark-red}{rgb}{0.4,0.15,0.15}
\definecolor{dark-blue}{rgb}{0,0,0.7}
\DeclareMathOperator*{\argmin}{argmin}
\newtheorem{rem}{Remark}[]
\newtheorem{ass}{Assumption}[]
\DeclarePairedDelimiter\autobracket{(}{)}
\newcommand{\br}[1]{\autobracket*{#1}}
\DeclarePairedDelimiter\autobrackett{[}{]}
\newcommand{\brr}[1]{\autobrackett*{#1}}
\newcommand{\todo}[1]{\textcolor{red}{}}
\title{Off-Policy Evaluation for Action-Dependent Non-Stationary Environments}
\author{%
  Yash Chandak
  \\
  University of Massachusetts
  \And
  Shiv Shankar
  \\
  University of Massachusetts
  \And
  Nathaniel D. Bastian
  \\
  United States Military Academy
  \And 
  Bruno Castro da Silva
  \\
  University of Massachusetts
  \And
  Emma Brunskill
  \\
   Stanford University
  \And
  Philip S. Thomas
  \\
  University of Massachusetts
}
\begin{document}

\etocdepthtag.toc{mtchapter}
\etocsettagdepth{mtchapter}{subsection}
\etocsettagdepth{mtappendix}{none}

\maketitle

\begin{abstract}

Methods for sequential decision-making are often built upon a foundational assumption that the underlying decision process is stationary. This limits the application of such methods because real-world problems are often subject to changes due to external factors (\textit{passive} non-stationarity), changes induced by interactions with the system itself (\textit{active} non-stationarity), or both (\textit{hybrid} non-stationarity). In this work, we take the first steps towards the fundamental challenge of on-policy and off-policy evaluation amidst structured changes due to active, passive, or hybrid non-stationarity. Towards this goal, we make a \textit{higher-order stationarity} assumption such that non-stationarity results in changes over time, but the way changes happen is fixed. We propose, OPEN, an algorithm that uses a double application of counterfactual reasoning and a novel importance-weighted instrument-variable regression to obtain both a lower bias and a lower variance estimate of the structure in the changes of a policy's past performances. Finally, we show promising results on how OPEN can be used to predict future performances for several domains inspired by real-world applications that exhibit non-stationarity. 
%
%
%
\end{abstract}
 \section{Introduction}
Methods for sequential decision making are often built upon a foundational assumption that the underlying decision process is stationary \citep{SuttonBarto2}. 
%
%
%
While this assumption was a cornerstone when laying the theoretical foundations of the field, and while is often reasonable, it is seldom true in practice and can be unreasonable \citep{dulac2019challenges}.
%
%
%
%
Instead, real-world problems are subject to non-stationarity that can be broadly classified as (a) \textit{Passive:} where the changes to the system are induced only by external (exogenous) factors, (b) \textit{Active:} where the changes result due to the agent's past interactions with the system, and (c) \textit{Hybrid:} where both passive and active changes can occur together \citep{khetarpal2020towards}. 
%
%


There are many applications that are subject to active, passive, or hybrid non-stationarity, and where the stationarity assumption may be unreasonable. 
Consider methods for automated healthcare where we would like to use the data collected over past decades to find better treatment policies.
%
%
In such cases, not only might there have been passive changes due to healthcare infrastructure changing over time, but active changes might also occur because of public health continuously evolving based on the treatments made available in the past, thereby resulting in hybrid non-stationarity.
Similar to automated healthcare, other applications like online education, product recommendations, and in fact almost all human-computer interaction systems need to not only account for the continually drifting behavior of the user demographic, but also how the preferences of users 
may change due to interactions with the system \citep{theocharous2020reinforcement}.
%
%
%
%
Even social media platforms need to account for the partisan bias of their users that change due to both external political developments and increased self-validation resulting from previous posts/ads suggested by the recommender system itself \citep{cinelli2021echo,gillani2018me}.
Similarly, motors in a robot suffer wear and tear over time not only based on natural corrosion but also on how vigorous past actions were. 
%
%

However, conventional off-policy evaluation methods \citep{precup2000eligibility,jiang2015doubly,xie2019towards} predominantly focus on the stationary setting.  
These methods assume availability of either (a) \textit{resetting assumption} to sample multiple sequences of interactions from a stationary environment with a fixed starting state distribution (i.e., episodic setting), or (b) \textit{ergodicity assumption} such that interactions can be sampled from a steady-state/stationary distribution (i.e., continuing setting). For the problems of our interest, methods based on these assumptions may not be viable. 
For e.g., in automated healthcare, we have a single long history for the evolution of public health, which is neither in a steady state distribution nor can we reset and go back in time to sample another history of interactions.

As discussed earlier, because of non-stationarity the transition dynamics and reward function in the future can be different from the ones in the past, and these changes might also be dependent on past interactions.
In such cases, how do we even address the fundamental challenge of \textit{off-policy evaluation}, i.e., using data from past interactions to estimate the performance of a new policy in the future?
Unfortunately, if the underlying changes are arbitrary, even amidst only passive non-stationarity it may not be possible to provide non-vacuous predictions of a policy's future performance \citep{chandak2020towards}.

%
%

Thankfully, for many real-world applications there might be (unknown) structure in the underlying changes.
In such cases, can the \textit{effect} of the underlying changes on a policy's performance be inferred, \textit{without} requiring estimation of the underlying model/process?
Prior work has only shown that this is possible in the passive setting. 
This raises the question that we aim to answer:
%
\begin{center}
\textit{How can one provide a unified procedure for (off) policy evaluation amidst active,\\ passive, or hybrid non-stationarity, when  the underlying changes are structured?}
\end{center}

\textbf{Contributions:} 
%
%
%
To the best of our knowledge, our work presents the first steps towards addressing the fundamental challenge of off-policy evaluation amidst structured changes due
to active or hybrid non-stationarity.
%
Towards this goal, we make a \textit{higher-order stationarity} assumption, under which the non-stationarity can result in changes over time, but the way changes happen is fixed.
Under this assumption, we propose  a model-free method that can infer the \textit{effect} of the underlying non-stationarity on the past performances and use that to predict the future performances for a given policy.
We call the proposed method OPEN: \underline{o}ff-\underline{p}olicy \underline{e}valuation for \underline{n}on-stationary domains.
On domains inspired by real-world applications, we show that OPEN often provides significantly better results not only in the presence of active and hybrid non-stationarity, but also for the passive setting where it even outperforms previous methods designed to handle only passive non-stationarity.

OPEN primarily relies upon two key insights: \textbf{(a)} For active/hybrid non-stationarity, as the underlying changes may dependend on past interactions, the structure in the changes observed when executing the data collection policy can be different than if one were to execute the evaluation policy.
To address this challenge, OPEN makes uses counterfactual reasoning twice and permits reduction of this off-policy evaluation problem to an auto-regression based forecasting problem.
\textbf{(b)} Despite reduction to a more familiar auto-regression problem, in this setting naive least-squares based estimates of parameters for auto-regression suffers from high variance and can even be asymptotically biased.
Finally, to address this challenge, OPEN uses a novel importance-weighted instrument-variable (auto-)regression technique to obtain asymptotically consistent and lower variance parameter estimates.

\section{Related Work}
Off-policy evaluation (OPE) is an important aspect of reinforcement learning \citep{precup2000eligibility,thomas2015higha,SuttonBarto2} and various techniques have been developed to construct efficient estimators for OPE \citep{jiang2015doubly,thomas2016data,munos2016safe,harutyunyan2016q,espeholt2018impala, xie2019towards}. However, these work focus on the stationary setting.
Similarly, there are various methods for tackling non-stationarity in the bandit setting \citep{moulines2008,besbes2014stochastic,seznec2018rotting,wang2019aware}. 
In contrast, the proposed work focuses on methods for sequential decision making.

Literature on off-policy evaluation amidst non-stationarity for sequential decision making is sparse.
Perhaps the most closely related works are by \citet{thomas2017predictive,chandak2020optimizing,xie2020deep,poiani2021meta,liotet2021lifelong}.
While these methods present an important stepping stone, such methods are for passive non-stationarity and, as we discuss using the toy example in Figure \ref{fig:badpassive}, may result in undesired outcomes if used as-is in real-world settings that are subject to active or hybrid non-stationarity.
%
%
%

\begin{wrapfigure}[7]{l}{0.4\textwidth}
    \centering    
    \vspace{-10pt}
    \includegraphics[width=0.38\textwidth]{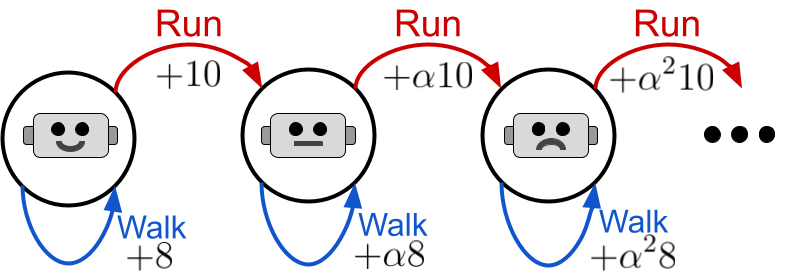}
    \caption{RoboToy domain.}
    \label{fig:badpassive}
\end{wrapfigure}
%

Consider a robot that can perform a task each day either by `walking' or `running'.
    A reward of $8$ is obtained upon completion using `walking', but `running' finishes the task quickly and results in a reward of $10$. However, `running' wears out the motors, thereby increasing the time to finish the task the next day and reduces the returns for \textit{both} `walking' and `running' by a small factor, $\alpha \in (0,1)$.
    %
    %
    
    Here, methods for tackling \textit{passive} non-stationarity will track the best policy under the assumption that the changes due to damages are because of external factors and would fail to attribute the cause of damage to the agent's decisions.
    Therefore, as on any given day `running' will always be better, every day these methods will prefer `running' over `walking' and thus \textit{aggravate} the damage.
    Since the outcome on each day is dependent on decisions made during  previous days this leads to active non-stationarity,  where `walking' is better in the long run.
    Finding a better policy first requires a method to evaluate a policy's (future) performance, which is the focus of this work.

Notice that the above problem can also be viewed as a task with effectively a \textit{single} lifelong episode. However, as we discuss later in Section \ref{sec:ass}, approaches such as modeling the problem as a large stationary POMDP or as a continuing average-reward MDP with a single episode may not be viable.
Further, non-stationarity can also be observed in multi-agent systems and games due to different agents/players interacting with the system.
However, often the goal in these other areas is to search for (Nash) equilibria, which may not even exist under hybrid non-stationarity. 
Non-stationarity may also result due to artifacts of the learning algorithm even when the problem is stationary.
While relevant, these other research areas are distinct from our setting of interest and we discuss them and others in more detail in Appendix \ref{sec:related}.

\section{Non-Stationary Decision Processes}

We build upon the formulation used by past work \citep{xie2020deep, chandak2020optimizing} and consider that the agent interacts with a lifelong sequence of partially observable Markov decision processes (POMDPs), $(M_i)_{i=1}^\infty$.
%
However, unlike prior problem formulations, we account for active and hybrid non-stationarity by considering POMDP $M_{i+1}$ to be dependent on \textit{both} on the POMDP $M_i$ and the decisions made by the agent while interacting with $M_i$.
We provide a control graph for this setup in Figure \ref{fig:controlgraph}.
%
%
%
For simplicity of presentation, we will often ignore the dependency of $M_{i+1}$ on $M_{i-k}$ for $k>0$, although our results can be extended for settings with $k>0$.

\textbf{Notation:} Let $\mathcal M$ be a finite set of POMDPs.
Each POMDP $M_i \in \mathcal M$ is a tuple $(\mathcal O, \mathcal S, \mathcal A, \Omega_i, P_i, R_i, \mu_i)$, where $\mathcal O$ is the set of observations, $\mathcal S$ is the set of states, and $\mathcal A$ is the set of actions, which are the same for all the POMDPs in $\mathcal M$.
For simplicity of notation, we assume $\mathcal M, \mathcal S, \mathcal O, \mathcal A$ are finite sets, although our results can be extended to settings where these sets are infinite or continuous.
Let $\Omega_i: \mathcal S \times \mathcal O \rightarrow [0,1]$ be the observation function, $P_i: \mathcal S \times \mathcal A \times \mathcal S \rightarrow [0,1]$ be the transition function, $\mu_i: \mathcal S \rightarrow [0,1]$ be the starting state distribution, and $R_i: \mathcal S \times \mathcal A \rightarrow [-R_{\max}, R_{\max}]$ be the reward function with $0 \leq R_{\max} < \infty$. 

Let $\pi: \mathcal O \times \mathcal A \rightarrow [0,1]$ be any policy and $\Pi$ be the set of all policies.
Let $H_i \coloneqq (O_i^t, A_i^t, R_i^t)_{t=1}^T$ be a sequence of at most $T$ interactions in $M_i$, where $O_i^t, A_i^t, R_i^t$ are the random variables corresponding to the observation, action, and reward at the step $t$.
Let $G_i\coloneqq\sum_{t=1}^T R_i^{t}$ be an observed return and $J_i(\pi) \coloneqq \mathbb{E}_{\pi}[G_i|M_i]$ be the performance of $\pi$ on $M_i$.
Let $\mathcal H$ be the set of possible interaction sequences, and finally let $\mathcal T: \mathcal M \times \mathcal H \times \mathcal M \rightarrow [0,1]$ be the transition function that governs the non-stationarity in the POMDPs.
That is, $\mathcal T(m,h,m')=\Pr(M_{i+1}{=}m'|M_i{=}m,H_i{=}h)$. 
%
%
\newpage
\begin{wrapfigure}[12]{r}{0.52\textwidth}
    \centering
    \includegraphics[width=0.52\textwidth]{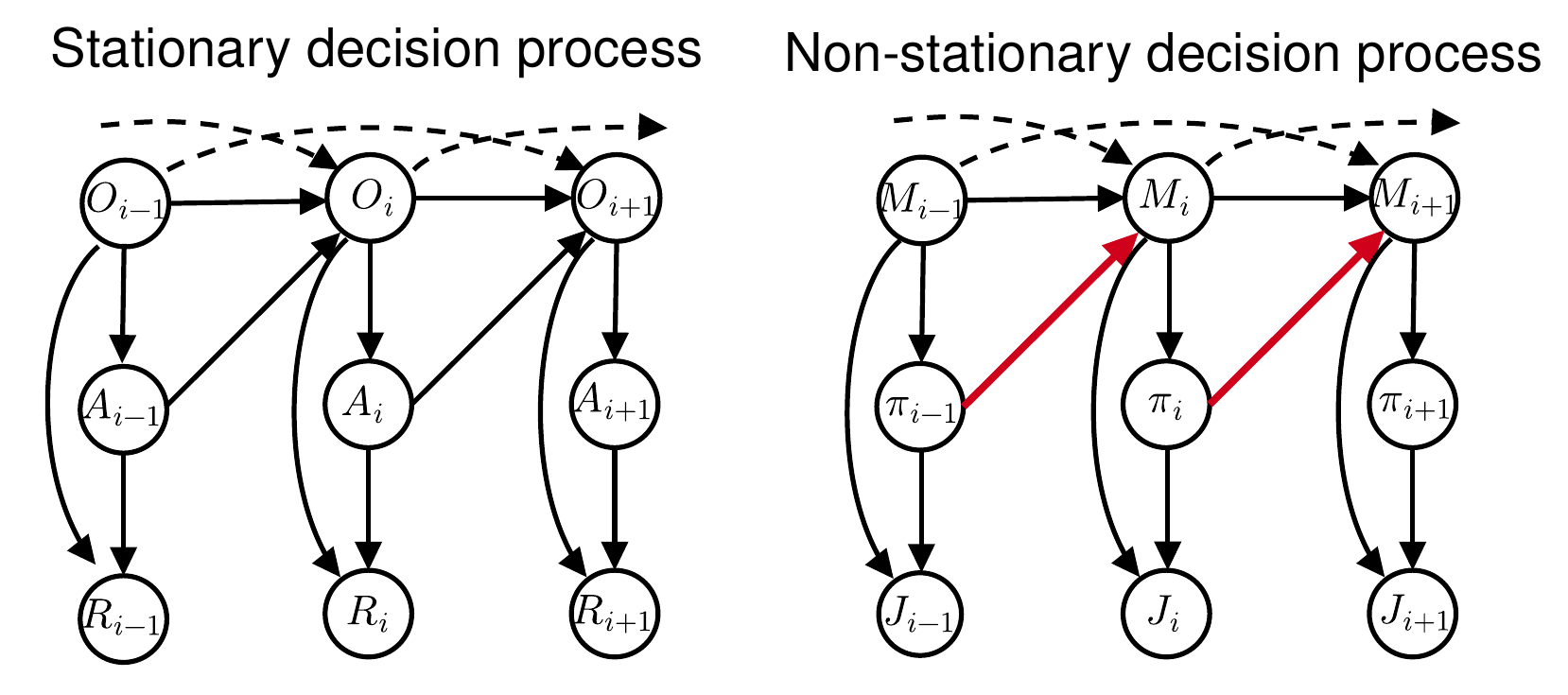}
    \vspace{-15pt}
    \caption{Control Graph for the non-stationary process. See text for symbol definitions. }
    \label{fig:controlgraph}
\end{wrapfigure}

    Figure \ref{fig:controlgraph} \textbf{(Left)} depicts the control graph for a stationary POMDP, where each column corresponds to one time step. 
    Here, \textit{multiple, independent} episodes from the \textit{same} POMDP can be resampled.
    \textbf{(Right)} Control graph that we consider for a non-stationary decision process, where each column corresponds to one episode.
    Here, the agent interacts with a \textit{single} sequence of related POMDPs $(M_i)_{i=1}^n$.
    Absence or presence of the red arrows indicates whether the change from $M_{i}$ to $M_{i+1}$ is independent of the decisions in $M_i$ (passive non-stationarity) or not (active non-stationarity).
    %
    %

\textbf{Problem Statement: } 
We look at the fundamental problem of evaluating the performance of a policy $\pi$ in the presence of non-stationarity.
Let $(H_i)_{i=1}^n$ be the data collected in the past by interacting using policies $(\beta_i)_{i=1}^n$.
%
Let $D_n$ be the dataset consisting of $(H_i)_{i=1}^n$ and the probabilities of the actions taken by $(\beta_i)_{i=1}^n$.
Given $D_n$, we aim to evaluate the expected future performance of $\pi$ if it is deployed for the \textit{next} $L$ episodes (each a different POMDP), that is
%
$  
   \mathscr{J}(\pi) \coloneqq \mathbb{E}_\pi\left[\sum_{k=n+1}^{n+L}  J_k(\pi) \middle| (H_i)_{i=1}^n \right]. \label{eqn:obj}
  $
%
We call it the \textit{on-policy} setting if $\forall i, \beta_i = \pi$, and the \textit{off-policy} setting otherwise.
Notice that even in the on-policy setting, naively aggregating observed performances from $(H_i)_{i=1}^n$ may not be indicative of $\mathscr J(\pi)$ as $M_{k}$ for $k>n$ may be different than $M \in (M_i)_{i=1}^n$ due to non-stationarity.

\section{Understanding Structural Assumptions} \label{sec:ass}
A careful reader would have observed that 
instead of considering interactions with a sequence of POMDPs $(M_i)_{i=1}^n$ that are each dependent on the past POMDPs and decisions, an equivalent setup might have been to consider a `chained' sequence of interactions $(H_1,H_2,...,H_n)$ as a \textit{single} episode in a `mega' POMDP comprised of all $M \in \mathcal M$.
Consequently, $\mathscr J(\pi)$ would correspond to the expected future return given $(H_i)_{i=1}^n$.
%
%
%
Tackling this single long sequence of interactions using the continuing/average-reward setting is not generally viable because methods for these settings rely on an ergodicity assumption (which implies that all states can always be revisited) that may not hold in the presence of non-stationarity. 
%
%
%
For instance, in the earlier example of automated healthcare, it is not possible to revisit past years.  
%


%
%
%

To address the above challenge, we propose introducing a different structural assumption. 
%
%
%
%
Particularly, framing the problem as a sequence of POMDPs allows us to split the single sequence of interactions into multiple (dependent) fragments, with additional structure linking together the fragments. 
%
%
Specifically, we make the following intuitive assumption.


\begin{ass}
    \thlabel{ass:fixedf}
    $\forall m \in \mathcal M$ such that the performance $J(\pi)$ associated with $m$ is $j$, 
    \begin{align}
        \forall \pi, \pi' \in \Pi^2, \forall i,\, \Pr(J_{i+1}(\pi)=j_{i+1}| M_i=m; \pi') = \Pr(J_{i+1}(\pi)=j_{i+1}|J_i(\pi)=j; \pi'). \label{eqn:nstransitiona}
    \end{align}
\end{ass}
%
\thref{ass:fixedf} characterizes  the probability that $\pi$'s performance will be $j_{i+1}$ in the $i+1^\text{th}$ episode when the policy $\pi'$ is executed in the $i^\text{th}$ episode.
To understand \thref{ass:fixedf} intuitively, consider a `meta-transition' function that characterizes $\Pr(J_{i+1}(\pi)|J_i(\pi),\pi')$ similar to how the standard transition function in an MDP characterizes $\Pr(S_{t+1}|S_t,A_t)$.
While the underlying changes actually happen via $\mathcal T$, \thref{ass:fixedf} imposes the following two conditions: \textbf{(a)} A \textit{higher-order stationarity} condition on the meta-transitions under which non-stationarity can result in changes over time, but \textit{the way the changes happen is fixed}, 
and \textbf{(b)} Knowing the past performance(s) of a policy $\pi$ provides \textit{sufficient} information for the meta-transition function to model how the performance will change upon executing any (possibly different) policy $\pi'$. 
For example, in the earlier toy robot domain, given the current performance there exists an (unknown) oracle that can predict the performance for the next day if the robot decides to `run'/`walk'.
%
%
%
%
%

\thref{ass:fixedf} is beneficial as it implicitly captures the effect of both the underlying passive and active non-stationarity by modeling the conditional distribution of the performance $J_{i+1}(\pi)$ given $J_i(\pi)$, when executing any (different) policy $\pi'$. 
At the same time, notice that it generalizes \textbf{(a)} 
the  stationary setting, where
$ \forall \pi \in \Pi, \,   \forall i>0, J_{i+1}(\pi) = J_i(\pi)$, and \textbf{(b)} only passive non-stationarity, which is a special case of \eqref{eqn:nstransitiona} wherein $\pi'$ does not influence the outcome, i.e.,
\begin{align}
    \forall \pi_a, \pi_b \in \Pi^2,\,\forall i>0,  \, \Pr(J_{i+1}(\pi)=J_{i+1}|J_i(\pi)=j; {\color{red} \pi_a}) = \Pr(J_{i+1}(\pi)=J_{i+1}|J_i(\pi)=j; {\color{red} \pi_b}).
\end{align}

\begin{rem}
In some cases, it may be beneficial to relax \thref{ass:fixedf} such that instead of using $\Pr(J_{i+1}(\pi)|J_i(\pi); \pi')$ in \eqref{eqn:nstransitiona}, one considers  $\Pr(J_{i+1}(\pi)|(J_{i-k}(\pi))_{k=0}^p; \pi')$. 
This can be considered similar to the p-Markov MDP where the transitions are characterized using $\Pr(S_{t+1}|(S_{t-i})_{i=0}^p, A_t)$. 
%
%
%
%
While we consider this general setting for our empirical results, for simplicity, to present the key ideas we will consider \eqref{eqn:nstransitiona}.
%
    %
We provide a detailed discussion on cases where we expect such an assumption to be \emph{(in)valid}, and also other potential assumptions in Appendix \ref{sec:assumption}.
\thlabel{rem:p}
\end{rem}

\section{Model-Free Policy Evaluation}
\label{sec:MFPE}
\begin{wrapfigure}[8]{r}{0.4\textwidth}
\vspace{-15pt}
    \centering
    \includegraphics[width=0.4\textwidth]{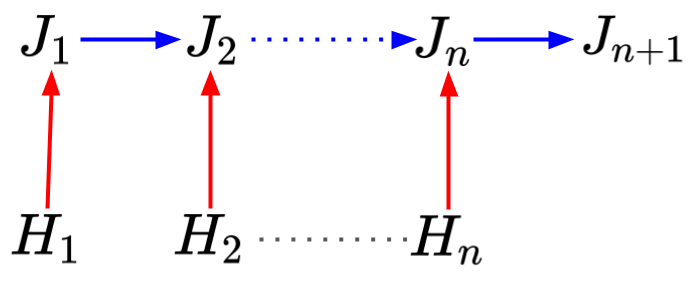}
    \vspace{-10pt}
    \caption{High-level idea.}
    \label{fig:idea}
\end{wrapfigure}

In this section we discuss how under \thref{ass:fixedf}, we can perform model-free off-policy evaluation amidst passive, active, or hybrid non-stationarity.
The high level idea can be decomposed into the following: \textbf{(a)} Obtain estimates of $(J_i(\pi))_{i=1}^n$ using  $(H_i)_{i=1}^n$ ({\color{red}red} arrows in Figure \ref{fig:idea}),  and \textbf{(b)} Use the estimates of $(J_i(\pi))_{i=1}^n$ to infer the \textit{effect} of the underlying non-stationarity on the performance, and use that to predict $(J_i(\pi))_{i=n+1}^{n+L}$ ({\color{blue}blue} arrows in Figure \ref{fig:idea}).

\paragraph{5.1 Counterfactual Reasoning}
$(J_i(\pi))_{i=1}^n$ could have been directly estimated if we had access to $(M_i)_{i=1}^n$.
    However, how do we estimate $(J_i(\pi))_{i=1}^n$ when we only have $(H_i)_{i=1}^n$  collected using interactions via possibly different data collecting policies $(\beta_i)_{i=1}^n$?

To estimate $(J_i(\pi))_{i=1}^n$, we use the collected data $D_n$ and aim to answer the following counterfactual question: \textit{what would the performance of $\pi$ would have been, if $\pi$ was used to interact with $M_i$ instead of $\beta_i$}?
To answer this, we make the following standard support assumption \citep{thomas2015higha,thomas2016data,xie2019towards} that says that any action that is likely under $\pi$ is also sufficiently likely under the policy $\beta_i$ for all $i$.
%
%
\begin{ass}
    $\forall o \in \mathcal O, \forall a \in \mathcal A$, and $\forall i \leq n$,  $\frac{\pi(o,a)}{\beta_i(o,a)}$ 
    is bounded above by a (unknown) constant $c$.
    \thlabel{ass:support}
\end{ass}
Under \thref{ass:support}, an unbiased estimate of $J_i(\pi)$ can be obtained using common off-policy evaluation methods like importance sampling (IS) or per-decision importance sampling (PDIS) \citep{precup2000eligibility},
%
        $\forall i, \widehat J_i(\pi) \coloneqq \sum_{t=1}^T \rho_i^t R_i^t,  \text{    where, }\rho_i^t \coloneqq \prod_{j=1}^t\!\! \frac{\pi(O_i^j,A_i^j)}{\beta_i(O_i^j,A_i^j)}. \label{eqn:PDIS}$
%
This $\widehat J_i(\pi)$ provides an estimate of $J_i(\pi)$ associated with each $M_i$ and policy $\pi$, as needed for the {\color{red} red} arrows in Figure \ref{fig:idea}.
%

\paragraph{5.2 Double Counterfactual Reasoning}
%
%
Having obtained the estimates for $(J_i(\pi))_{i=1}^n$, we now aim to estimate how the performance of $\pi$ changes due to the underlying non-stationarity.
Recall that under active or hybrid non-stationarity, changes in a policy's performance due to the underlying non-stationarity is dependent on the past actions.
From \thref{ass:fixedf}, let
%
\begin{align}
    \forall i>0, \quad F_\pi(x, \pi', y) \coloneqq \Pr(J_{i+1}(\pi)=y|J_i(\pi)=x;\pi')
    \label{eqn:jk}
\end{align}
denote how the performance of $\pi$ changes between episodes, if $\pi'$ was executed. 
Here $J_{i+1}(\pi)$ is a random variable because of stochasticity in $H_i$ (i.e., how $\pi'$ interacts in $M_i$), as well as in the meta-transition from POMDP $M_i$ to $M_{i+1}$.
Similarly, let
%
\begin{align}
   \forall i>0, \quad f(J_i(\pi), \pi'; \theta_\pi) \coloneqq \mathbb{E}_{\pi'}\left[J_{i+1}(\pi) |J_i(\pi)\right] = \sum\nolimits_{y \in \mathbb R} F_\pi(J_i(\pi), \pi', y) y 
\end{align}
be some (unknown) function parameterized by $\theta_\pi \in \Theta$, which denotes the \textit{expected} performance of $\pi$ in episode $i+1$, if in episode $i$, $\pi$'s performance  was $J_i(\pi)$ and $\pi'$ was executed.
%
%
%
Parameters $\theta_\pi$ depend on $\pi$ and thus $f$ can model different types of changes to the performance of different policies.

Recall from Figure \ref{fig:idea} ({\color{blue}blue} arrows), if we can estimate $f(\cdot, \pi;\theta_\pi)$ to infer how $J_i(\pi)$ changes due to the underlying non-stationarity when interacting with $\pi$, then we can use it to predict $(J_i(\pi))_{i=n+1}^{n+L}$ when $\pi$ is deployed in the future.
In the following, we will predominantly focus on estimating $f(\cdot,\pi; \theta_\pi)$ using past data $D_n$.
Therefore, for brevity we let $f(\cdot;\theta_\pi) \coloneqq f(\cdot, \pi; \theta_\pi)$.

If pairs of $(J_i(\pi), J_{i+1}(\pi))$ are available when the transition between $M_i$ and $M_{i+1}$ occurs due to execution of $\pi$, then one could auto-regress $ J_{i+1}(\pi)$ on $J_i(\pi)$ to estimate $f(\cdot;\theta_\pi)$ and model the changes in the performance of $\pi$.
However, the sequence $(\widehat J_i(\pi))_{i=1}^n$ obtained from counterfactual reasoning cannot be used as-is for auto-regression.
This is because the changes that occurred between $M_i$ and $M_{i+1}$ are associated with the execution of $\beta_i$, not $\pi$.
For example, recall the toy robot example in Figure \ref{fig:badpassive}. If data was collected by mostly `running', then the performance of `walking' would decay as well.
Directly auto-regressing on the past performances of `walking' would result in how the performance of `walking' would change \textit{when actually executing `running'}.
However, if we want to predict performances of `walking' in the future, what we actually want to estimate is how the performance of `walking' changes \textit{if `walking' is actually performed}.

To resolve the above issue, we ask another counter-factual question:  \textit{What would the performance of $\pi$ in $M_{i+1}$ have been had we executed $\pi$, instead of $\beta_i$, in $M_i$?} 
%
%
In the following theorem we show how this question can be answered with a second application of the importance ratio $\rho_i \coloneqq \rho_i^T$. 
\begin{restatable}[]{rtheorem}{doubleISthm}
\thlabel{lemma:doubleIS}
Under \thref{ass:fixedf,ass:support},
$\forall m \in \mathcal M$ such that the performance $J(\pi)$ associated with $m$ is $j$, 
     $\mathbb{E}_{\pi}\left[J_{i+1}(\pi) |J_i(\pi)=j\right]
     = \mathbb{E}_{\beta_i,\beta_{i+1}}\big[\rho_i \widehat J_{i+1}(\pi) \big| M_{i}=m\big]$.
\end{restatable}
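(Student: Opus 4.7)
The plan is to prove the identity by a two-stage application of importance sampling, peeling off the expectation in episode $i+1$ first and then in episode $i$, before finally invoking Assumption~\ref{ass:fixedf} to rewrite the conditioning event.

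First I would rewrite the RHS using the tower property, conditioning on $(H_i, M_{i+1})$ inside the outer expectation. Because $\rho_i$ is a function of $H_i$ alone and $M_{i+1}$ is determined by $(M_i, H_i)$ through the meta-transition $\mathcal{T}$,
\[
\mathbb{E}_{\beta_i,\beta_{i+1}}\!\big[\rho_i \widehat J_{i+1}(\pi) \,\big|\, M_i = m\big]
= \mathbb{E}_{\beta_i}\!\Big[\rho_i \cdot \mathbb{E}_{\beta_{i+1}}\!\big[\widehat J_{i+1}(\pi)\,\big|\, M_{i+1}\big]\,\Big|\, M_i = m\Big].
\]
The inner expectation is the standard (per-decision) importance sampling identity applied inside the single POMDP $M_{i+1}$: under Assumption~\ref{ass:support}, $\widehat J_{i+1}(\pi)$ is an unbiased estimate of $J_{i+1}(\pi)$ given $M_{i+1}$, so the inner expectation equals $J_{i+1}(\pi)$.

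Next I would handle the outer expectation in the same way. Here $\rho_i = \prod_{t=1}^T \pi(O_i^t,A_i^t)/\beta_i(O_i^t,A_i^t)$ is exactly the trajectory-level Radon--Nikodym derivative converting $\beta_i$'s law over $H_i$ into $\pi$'s law over $H_i$ (again using Assumption~\ref{ass:support} for absolute continuity). The crucial observation is that the meta-transition $\mathcal{T}(\cdot\mid M_i, H_i)$ governing $M_{i+1}$ depends on the trajectory $H_i$ but not directly on the policy that produced it. Therefore, for any function $g$ of $M_{i+1}$,
\[
\mathbb{E}_{\beta_i}\!\big[\rho_i\, g(M_{i+1}) \,\big|\, M_i = m\big] = \mathbb{E}_{\pi}\!\big[g(M_{i+1}) \,\big|\, M_i = m\big].
\]
Applying this with $g(M_{i+1}) = J_{i+1}(\pi)$ gives
\[
\mathbb{E}_{\beta_i,\beta_{i+1}}\!\big[\rho_i \widehat J_{i+1}(\pi) \,\big|\, M_i = m\big] = \mathbb{E}_{\pi}\!\big[J_{i+1}(\pi)\,\big|\, M_i = m\big].
\]

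Finally, the hypothesis says $m$ has $J(\pi)=j$, so by Assumption~\ref{ass:fixedf} with $\pi' = \pi$ the conditional law of $J_{i+1}(\pi)$ given $M_i=m$ (under $\pi$) coincides with its conditional law given $J_i(\pi)=j$ (under $\pi$). Taking expectations yields $\mathbb{E}_{\pi}[J_{i+1}(\pi)\mid M_i=m] = \mathbb{E}_{\pi}[J_{i+1}(\pi)\mid J_i(\pi)=j]$, which is the LHS.

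The main obstacle is the middle step: one must be careful that the importance ratio $\rho_i$ correctly transports not only the distribution of the trajectory $H_i$ but also the induced distribution of the next POMDP $M_{i+1}$. This is where the structural fact that $\mathcal{T}$ depends on $H_i$ only through its realization (and is conditionally independent of the acting policy given $H_i$) is essential; without it, the second change of measure would not close. Everything else is bookkeeping with the tower property and two applications of standard IS.
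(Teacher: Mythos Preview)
Your proposal is correct and follows essentially the same approach as the paper: two nested applications of importance sampling (inner to unbias $\widehat J_{i+1}(\pi)$ given $M_{i+1}$, outer to transport the law of $H_i$ and hence of $M_{i+1}$ from $\beta_i$ to $\pi$, using that $\mathcal T$ is policy-agnostic given $H_i$), followed by Assumption~\ref{ass:fixedf} to replace conditioning on $M_i=m$ by conditioning on $J_i(\pi)=j$. The paper merely writes the same argument as explicit sums over trajectories rather than via the tower property and change-of-measure language.
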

See Appendix \ref{apx:proofdis} for the proof.
Intuitively, as $\beta_i$ and $\beta_{i+1}$ were used to collect the data in $i$ and $i+1^\text{th}$ episodes, respectively, \thref{lemma:doubleIS} uses $\rho_i$ to first correct for the mismatch between  $\pi$ and  $\beta_i$ that influences how  $M_i$ changes to $M_{i+1}$ due to interactions $H_i$.
%
Secondly, $\widehat J_{i+1}$ corrects for the mismatch between  $\pi$ and $\beta_{i+1}$ for the sequence of interactions $H_{i+1}$ in $M_{i+1}$.

\paragraph{5.3 Importance-Weighted IV-Regression}
An important advantage of \thref{lemma:doubleIS} is that given $J_{i}(\pi)$, $\rho_i \widehat J_{i+1}(\pi)$ provides an unbiased estimate of $\mathbb{E}_{\pi}\left[J_{i+1}(\pi) |J_i(\pi)\right]$, even though $\pi$ may not have been used for data collection.
This permits using $Y_i \coloneqq \rho_i \widehat J_{i+1}(\pi)$ as a target for predicting the next performance given $X_i \coloneqq J_i(\pi)$, i.e., to estimate $f(J_i(\pi);\theta_\pi)$ through regression on $(X_i,Y_i)$ pairs.

However, notice that performing regression on the pairs $(X_i=J_{i}(\pi), Y_i=\rho \widehat J_{i+1}(\pi))_{i=1}^{n-1}$ may not be directly possible
 as we do not have $J_i(\pi)$; only unbiased \textit{estimates} $\widehat J_i(\pi)$ of $J_i(\pi)$.
This is problematic because in least-squares regression, while noisy estimates of the \textit{target} variable $Y_i$ are fine, noisy estimates of the \textit{input} variable $X_i$ may result in estimates of $\theta_\pi$ that are \textit{not even} asymptotically consistent \textit{even} when the underlying $f$ is a linear function of its inputs.
To see this clearly, consider the following naive estimator,
\begin{align}
        \hat \theta_\texttt{naive} &\in \argmin_{\theta \in \Theta} \,\, \sum\nolimits_{i=1}^{n-1}\left( f\left(\widehat J_{i}(\pi); \theta\right) -  \rho_{i}\widehat J_{i+1}(\pi) \right)^2. \label{eqn:bad}
\end{align}
Because $\widehat J_i(\pi)$ is an unbiased estimate of $J_{\pi}$, without loss of generality, let $\widehat J_i(\pi) = J_i(\pi) + \eta_i$, where $\eta_i$ is mean zero noise.
Let $\mathbb N \coloneqq [\eta_1, \eta_2, ..., \eta_{n-1}]^\top$ and $
    \mathbb J \coloneqq [J_1(\pi), J_2(\pi), ..., J_{n-1}(\pi)]^\top$.
 $\theta_\texttt{naive}$ can now be expressed as (see Appendix \ref{proof:bias}),
\begin{align}
    \hat \theta_\texttt{naive} \overset{a.s.}{\longrightarrow} \left(\mathbb J^\top \mathbb J + \mathbb N ^\top \mathbb N \right)^{-1} \mathbb{J}^\top \mathbb J \theta_\pi \,\, \cancel{\overset{a.s.}{\longrightarrow} } \, \theta_\pi. \label{eqn:naiveasymptote}    
\end{align}
Observe that $\mathbb N ^\top \mathbb N$ in \eqref{eqn:naiveasymptote} relates to the variances of the mean zero noise variables $\eta_i$.
The greater the variances, the more $\hat \theta_\texttt{naive}$ would be biased towards zero
(if $\forall i, \,\eta_i=0$, then the true $\theta_\pi$ is trivially recovered).
Intuitively, when the variance of $\eta_i$ is high, noise dominates and the structure in the data gets suppressed even in the large-sample regime.
Unfortunately, the importance sampling based estimator $\widehat J_i(\pi)$ in the sequential decision making setting is infamous for extremely high variance \citep{thomas2015higha}.
Therefore, $\hat \theta_\texttt{naive}$  can be extremely biased and will not be able to capture the trend in how performances are changing, \textit{even in the limit of infinite data and linear $f$}.
The problem may be exacerbated when $f$ is non-linear.

\paragraph{5.3.1 Bias Reduction}
To mitigate the bias stemming from noise in input variables, we introduce a novel instrument variable (IV)  \citep{pearl2000models} regression method for tackling non-stationarity.
Instrument variables $Z$ represent some side-information and were originally used in the causal literature to mitigate any bias resulting due to spurious correlation, caused by unobserved confounders, between the input and the target variables.
For mitigating bias in our setting, IVs can intuitively be considered as some side-information to `denoise' the input variable before performing  regression.
For this IV-regression, an ideal IV is \textit{correlated} with the input variables (e.g., $\widehat J_i(\pi)$) but \textit{uncorrelated} with the noises in the input variable (e.g., $\eta_i$).  
\todo{Mention error-in-variables}

%
We propose leveraging statistics based on past performances as an IV for $\widehat J_{i}(\pi)$.
For instance, using $Z_i \coloneqq \widehat J_{i-1}(\pi)$
 as an IV for $\widehat J_{i}(\pi)$.
Notice that while correlation between $J_{i-1}(\pi)$ and $J_{i}(\pi)$ can directly imply correlation between $\widehat J_{i-1}(\pi)$ and $\widehat J_{i}(\pi)$, values of  $J_{i-1}(\pi)$ and $J_{i}(\pi)$ are dependent on non-stationarity in the past.
Therefore, we make the following assumption, which may easily be satisfied when the consecutive performances do not change arbitrarily.
\begin{ass}
    $\forall i, \quad \operatorname{Cov}\big(\widehat J_{i-1}(\pi), \widehat J_{i}(\pi)\big) \neq 0$. 
    \thlabel{ass:correlated}
\end{ass}
However, notice that the noise in $\widehat J_{i}(\pi)$ can be \textit{dependent} on $\widehat J_{i-1}(\pi)$.
This is because non-stationarity can make $H_{i-1}$ and $H_{i}$ dependent, which are in turn used to estimate $\widehat J_{i-1}(\pi)$ and $\widehat J_{i}(\pi)$, respectively.
Nevertheless, perhaps interestingly, we show that despite not being independent,  $\widehat J_{i-1}(\pi)$ is \textit{uncorrelated} with the noise in $\widehat J_i(\pi)$.  
\begin{restatable}[]{rtheorem}{covthm} Under \thref{ass:fixedf,ass:support}, 
    $\forall i, \quad \operatorname{Cov}\big(\widehat J_{i-1}(\pi), \widehat J_{i}(\pi) - J_{i}(\pi)\big) = 0$.
    \thlabel{thm:cov}
\end{restatable}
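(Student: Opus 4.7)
The plan is to exploit the conditional unbiasedness of the per-decision importance sampling (PDIS) estimator together with a carefully chosen conditioning set that breaks the dependence induced by active non-stationarity. Concretely, for any episode $i$, PDIS under \thref{ass:support} yields $\mathbb{E}_{\beta_i}[\widehat J_i(\pi) \mid M_i] = J_i(\pi)$, so the ``noise'' $\eta_i \coloneqq \widehat J_i(\pi) - J_i(\pi)$ has conditional mean zero given $M_i$. The hope is to push this conditional centering through a tower argument without being derailed by the fact that $\widehat J_{i-1}(\pi)$ and $M_i$ are statistically dependent (since $H_{i-1}$ causes the transition from $M_{i-1}$ to $M_i$ under active/hybrid non-stationarity).

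The key step would be to condition jointly on the pair $(M_i, H_{i-1})$. Given the control graph in Figure \ref{fig:controlgraph}, once $M_i$ is fixed the sampling of $H_i$ depends only on $M_i$ and $\beta_i$, and in particular $H_i$ is conditionally independent of $H_{i-1}$. Therefore $\mathbb{E}[\widehat J_i(\pi) \mid M_i, H_{i-1}] = \mathbb{E}[\widehat J_i(\pi) \mid M_i] = J_i(\pi)$. Since $\widehat J_{i-1}(\pi)$ is a measurable function of $H_{i-1}$ and $J_i(\pi)$ is a deterministic function of $M_i$, both are constant on the conditioning $\sigma$-algebra and I would factor them out:
\begin{align}
\mathbb{E}\big[\widehat J_{i-1}(\pi)\big(\widehat J_i(\pi) - J_i(\pi)\big) \,\big|\, M_i, H_{i-1}\big]
 = \widehat J_{i-1}(\pi)\,\mathbb{E}\big[\widehat J_i(\pi) - J_i(\pi) \,\big|\, M_i, H_{i-1}\big] = 0.
\end{align}
Taking an outer expectation gives $\mathbb{E}[\widehat J_{i-1}(\pi)(\widehat J_i(\pi) - J_i(\pi))] = 0$.

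Then for the covariance I only need to check that the product of marginal means is also zero. By the tower property, $\mathbb{E}[\widehat J_i(\pi) - J_i(\pi)] = \mathbb{E}[\mathbb{E}[\eta_i \mid M_i]] = 0$, so
\begin{align}
\operatorname{Cov}\big(\widehat J_{i-1}(\pi), \widehat J_i(\pi) - J_i(\pi)\big)
= \mathbb{E}\big[\widehat J_{i-1}(\pi)(\widehat J_i(\pi) - J_i(\pi))\big] - \mathbb{E}[\widehat J_{i-1}(\pi)] \cdot 0 = 0.
\end{align}

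The only delicate step I anticipate is justifying the conditional independence $H_i \perp H_{i-1} \mid M_i$ rigorously under the paper's non-stationary model. This is intuitively clear from the control graph (active non-stationarity routes the influence of $H_{i-1}$ on future episodes \emph{through} $M_i$ via $\mathcal T$), but it should be stated explicitly so the reader sees why conditioning on $M_i$ is sufficient to neutralize the correlation between $\widehat J_{i-1}(\pi)$ and the per-episode PDIS noise $\eta_i$. Everything else is a routine application of PDIS unbiasedness and the tower rule, and notably \thref{ass:fixedf} is not needed for this particular claim; only \thref{ass:support} and the Markov structure of the sequence of POMDPs are used.
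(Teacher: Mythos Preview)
Your proposal is correct and follows essentially the same route as the paper: both arguments decompose the covariance into the cross-moment minus the product of means, kill the latter via unbiasedness of $\widehat J_i(\pi)$ for $J_i(\pi)$, and handle the former by a tower argument that conditions on the current POMDP to invoke conditional independence of $H_i$ from the past. The only cosmetic difference is the conditioning set: the paper conditions on $(\widehat J_{i-1}(\pi), M_i)$ in two nested steps, whereas you condition directly on $(M_i, H_{i-1})$; your choice is arguably cleaner since $H_{i-1}$ is the primitive object and $\widehat J_{i-1}(\pi)$ is measurable with respect to it. Your closing remark that \thref{ass:fixedf} is not actually invoked is also accurate---the paper lists it in the hypothesis but its proof, like yours, only uses \thref{ass:support} and the Markov structure of the POMDP sequence.
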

See Appendix \ref{apx:proofiwiv} for the proof.
Finally, as IV regression requires learning an additional function $g\coloneqq \mathbb R \rightarrow \mathbb R$ parameterized by $\varphi \in \Omega$ (intuitively, think of this as a denoising function),  we let $\widehat J_{i-1}(\pi)$ be an IV for $\widehat J_{i}(\pi)$ and propose the following IV-regression based estimator, 
%
%
%
%
%
\begin{align}
    \hat \varphi_n &\in \argmin_{\varphi \in \Omega} \,\, \sum\nolimits_{i=2}^{n}\left( g\left(\widehat J_{i-1}(\pi); \varphi\right) - \widehat J_{i}(\pi) \right)^2 \label{eqn:g}
    \\
    \hat \theta_n &\in \argmin_{\theta \in \Theta}  \sum\nolimits_{i=2}^{n-1}\left( f\left(g\left(\widehat J_{i-1}(\pi); \hat \varphi_n\right); \theta\right) - \rho_{i}\widehat J_{i+1}(\pi) \right)^2\!\!\!. \label{eqn:g2}
\end{align}
%
%
%
%
\begin{restatable}[]{rtheorem}{consistencythm}
\thlabel{thm:consistent}
Under \thref{ass:fixedf,ass:support,ass:correlated}, if $f$ and $g$ are linear functions of their inputs, then $\hat \theta_n$ is a strongly consistent estimator of $\theta_\pi$, i.e., 
%
    $\hat \theta_n 
    \overset{\text{a.s.}}{\longrightarrow}
    \theta_\pi$.
%
(See Appendix \ref{apx:proofiwiv} for the proof.)
\end{restatable}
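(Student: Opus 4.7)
The plan is to exploit the linearity of $f$ and $g$ to collapse the nested two-stage procedure into a single instrumental-variable ratio, and then obtain almost-sure convergence by combining \thref{lemma:doubleIS} and \thref{thm:cov} with a Markov-chain strong law under the higher-order-stationary dynamics of \thref{ass:fixedf}.

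\textbf{Reduction to a 2SLS ratio.} Fix the shorthand $Z_i \coloneqq \widehat J_{i-1}(\pi)$, $X_i \coloneqq \widehat J_i(\pi)$, and $Y_i \coloneqq \rho_i \widehat J_{i+1}(\pi)$. With linear $f(x;\theta) = \theta x$ and $g(x;\varphi) = \varphi x$, the first-stage objective \eqref{eqn:g} has closed form $\hat\varphi_n = \sum_i Z_i X_i / \sum_i Z_i^2$, and substituting $\hat X_i \coloneqq \hat\varphi_n Z_i$ into \eqref{eqn:g2} and cancelling $\hat\varphi_n$ in numerator and denominator yields the familiar two-stage-least-squares identity
\[
\hat\theta_n \;=\; \frac{\sum_{i} Z_i Y_i}{\sum_{i} Z_i X_i}.
\]

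\textbf{Identifying the limit.} Under \thref{ass:fixedf} the performance sequence $\{J_i(\pi)\}$ is a time-homogeneous Markov chain; combined with the conditional independence of the within-episode interactions $H_i$ given $M_i$, the triple $(Z_i, X_i, Y_i)$ is a fixed functional of a sliding window of an enlarged Markov state. Assuming this meta-chain is sufficiently mixing for a Markov strong law to apply, $n^{-1}\sum_i Z_i X_i$ and $n^{-1}\sum_i Z_i Y_i$ converge almost surely to $\mathbb{E}[Z X]$ and $\mathbb{E}[Z Y]$ under the invariant law. Writing $X_i = J_i(\pi) + \eta_i$ with $\mathbb{E}[\eta_i] = 0$ (unbiasedness of PDIS under \thref{ass:support}), \thref{thm:cov} gives $\mathbb{E}[Z_i \eta_i] = \operatorname{Cov}(Z_i, \eta_i) = 0$. \thref{lemma:doubleIS} gives $\mathbb{E}[Y_i \mid J_i(\pi)] = \theta_\pi J_i(\pi)$, and the Markov property from \thref{ass:fixedf} makes this unaffected by further conditioning on the past summary $Z_i$, so
\[
\mathbb{E}[Z_i Y_i] \;=\; \theta_\pi\,\mathbb{E}[Z_i J_i(\pi)] \;=\; \theta_\pi\bigl(\mathbb{E}[Z_i X_i] - \mathbb{E}[Z_i \eta_i]\bigr) \;=\; \theta_\pi\,\mathbb{E}[Z_i X_i].
\]
\thref{ass:correlated} (after a centering step, or by interpreting $f$ and $g$ as affine with an intercept absorbed) ensures $\mathbb{E}[Z X] \neq 0$, and the continuous-mapping theorem for almost-sure convergence then delivers $\hat\theta_n \to \theta_\pi$ almost surely.

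\textbf{Main obstacle.} The only genuinely technical step is the strong law on the meta-chain in the identification argument. The sequence $(Z_i, X_i, Y_i)$ is neither i.i.d.\ nor uniformly bounded --- importance ratios, although almost-surely bounded via \thref{ass:support}, are often only weakly controlled in practice and can have large variance, and episodes are explicitly coupled through the active non-stationarity. \thref{ass:fixedf} gives time-homogeneity but does not by itself guarantee irreducibility, aperiodicity, or mixing of the meta-chain, so the proof will need an additional regularity condition on the meta-transition $F_\pi$ (e.g., finiteness of $\mathcal{M}$ together with recurrence, or a Doeblin-style minorization) under which a Markov strong law applies. Turning \thref{ass:correlated} into a nonzero denominator moment is a routine bookkeeping step by comparison.
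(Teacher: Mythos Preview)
Your reduction of the two-stage procedure to the scalar IV ratio $\hat\theta_n=\sum_i Z_iY_i/\sum_i Z_iX_i$ is exactly the paper's Part~(a), and your use of \thref{thm:cov} and \thref{lemma:doubleIS} to identify the population moments is the right instinct.

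Where you diverge from the paper is in the limiting argument, and this is precisely where you manufacture your ``main obstacle.'' You route the proof through a Markov-chain strong law under an invariant distribution, which forces you to demand irreducibility/mixing/Doeblin conditions on the meta-chain that are nowhere in the paper's assumptions. The paper avoids this entirely. Instead of showing $n^{-1}\sum Z_iX_i$ and $n^{-1}\sum Z_iY_i$ converge to expectations under some stationary law, it writes $\hat\theta_n=\theta_\pi+\text{(error terms)}$ and shows each error term is an average of random variables that are \emph{bounded} (finite $T$, $R_{\max}<\infty$, and the IS-ratio bound $c$ from \thref{ass:support}) and \emph{pairwise uncorrelated} (by the same tower/conditional-independence reasoning that proves \thref{thm:cov}). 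For such sequences, Rajchman's strong law of large numbers for uncorrelated random variables gives almost-sure convergence to zero without any ergodicity, mixing, or even identical distribution. \thref{ass:correlated} is then only used to keep the denominator away from zero, not to identify a stationary moment.

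So your proposal is not wrong, but the obstacle you flag is self-inflicted: by insisting on convergence to invariant-law moments you need machinery the problem does not require. The paper's decomposition-plus-Rajchman argument is strictly more elementary and works under exactly the stated assumptions.
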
 
%
%
\begin{rem}
    Other choices of instrument variables $Z_i$ (apart from $Z_i = \widehat J_{i-1}(\pi)$) are also viable.
    We discuss some alternate choices in Appendix
    \ref{apx:arp}.
    These other IVs can be used in \eqref{eqn:g} and \eqref{eqn:g2} by replacing $\widehat J_{i-1}(\pi)$ with the alternative $Z_i$.
\end{rem}

\begin{rem}
    As discussed earlier, it may be beneficial to model $J_{i+1}(\pi)$ using $(J_k(\pi))_{k=i-p+1}^{i}$ with $p>1$.
    The proposed estimator can be easily extended by making  $f$ dependent on multiple past terms
    $(X_k)_{k=i-p+1}^i$, where $\forall k, \, X_k \coloneqq g((\widehat J_l(\pi))_{l=k-p}^{k-1}; \hat \phi)$.
    We discuss this in more detail in Appendix \ref{apx:arp}.
    The proposed procedure is also related to methods that use lags of the time series as instrument variables \citep{bellemare2017lagged, wilkins2018lag, wang2019lagged}.
    \thlabel{rem:p}
\end{rem}

\begin{rem}
    An advantage of the model-free setting is that we only need to consider changes in $J(\pi)$, which is a \emph{\textbf{scalar}} statistic.
    For scalar quantities, \emph{linear} auto-regressive models have been known to be useful in modeling a wide variety of time-series trends.
    %
    Nonetheless, \emph{non-linear} functions like RNNs and LSTMs \citep{hochreiter1997long} may also be leveraged using deep instrument variable methods \citep{hartford2017deep,bennett2019deep,liu2020deep,xu2020learning}.   %
\end{rem}

As required for the \textcolor{blue}{blue} arrows in Figure \ref{fig:idea}, $f(\cdot; \hat \theta_n)$ can now be used to estimate the expected value $\mathbb{E}_{\pi}\left[J_{i+1}(\pi) |J_i(\pi)\right]$ under hybrid non-stationarity.
Therefore, using $f(\cdot; \hat \theta_n)$ we can now auto-regressively forecast the future values of $(J_i(\pi))_{i=n+1}^{n+L}$ and obtain an estimate for $\mathscr J(\pi)$.
%
%
A complete algorithm for the proposed procedure is provided in Appendix \ref{apx:algo}.

\paragraph{5.3.2 Variance Reduction}
As discussed earlier, importance sampling results in noisy estimates of $J_{i}(\pi)$.
During regression, while high noise in the input variable leads to high bias, high noise in the target variables leads to high variance parameter estimates.
Unfortunately, \eqref{eqn:g} and \eqref{eqn:g2} have target variables containing $\rho_i$ (and $\rho_{i+1}$) which depend on the product of importance ratios and can thus result in extremely large values leading to higher variance parameter estimates.

The instrument variable technique helped in mitigating bias.
To mitigate variance, we draw inspiration from the reformulation of weighted-importance sampling presented for the \textit{stationary} setting by \citet{mahmood2014weighted}, and propose the following estimator,
\begin{align}
    \tilde \varphi_n &\in \argmin_{\varphi \in \Omega} \,\, \sum_{i=2}^{n} \bar \rho_i \left( g\left(\widehat J_{i-1}(\pi); \varphi\right) -  G_{i}(\pi) \right)^2, & \text{where \,\,}  \bar \rho_i \coloneqq \frac{\rho_{i}}{\sum_{j=2}^{n}\rho_j}
    \label{eqn:gaa} 
    \\
    \tilde \theta_n &\in \argmin_{\theta \in \Theta}  \sum_{i=2}^{n-1} \rho_i^\dagger \left( f\left(g\left(\widehat J_{i-1}(\pi); \tilde \varphi_n\right); \theta\right) -  G_{i+1}(\pi) \right)^2, & \text{where \,\,} \rho_i^\dagger \coloneqq \frac{\rho_{i}\rho_{i+1}}{\sum_{j=2}^{n-1}\rho_j \rho_{j+1}} \label{eqn:s2aa}
\end{align}
where $G_{i}$ is the return observed for $M_{i}$.
Intuitively, instead of importance weighting the \textit{target},  we importance weight the squared error, proportional to how likely that \textit{error} would be if $\pi$ was used to collect the data.
Since dividing by any constant does not affect $\tilde \varphi_n$ and $\tilde \theta_n$, the choice of $\bar \rho_i$ and $\rho_i^\dagger$ ensures that both $\bar \rho_i$ and $\rho_i^\dagger \in [0,1]$, thereby mitigating variance but still providing consistency.
\begin{restatable}[]{rtheorem}{wconsistencythm}

\thlabel{thm:wconsistent}
Under \thref{ass:fixedf,ass:support,ass:correlated}, if $f$ and $g$ are linear functions of their inputs, then $\tilde \theta_n$ is a strongly consistent estimator of $\theta_\pi$, i.e.,
%
    $\tilde \theta_n 
    \overset{\text{a.s.}}{\longrightarrow}
    \theta_\pi$.
%
(See Appendix \ref{apx:proofiwiv} for the proof.)
\end{restatable}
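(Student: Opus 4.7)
The plan is to follow the structure of Theorem \ref{thm:consistent}'s proof and only account for the new weight factors $\bar\rho_i$ and $\rho_i^\dagger$. A key preliminary observation is that $\bar\rho_i$ and $\rho_i^\dagger$ differ from $\rho_i$ and $\rho_i\rho_{i+1}$ only by positive scalars independent of $\varphi$ and $\theta$; multiplying \eqref{eqn:gaa} by $\sum_j \rho_j$ and \eqref{eqn:s2aa} by $\sum_j \rho_j \rho_{j+1}$ leaves the $\argmin$ unchanged. So it suffices to analyze the equivalent unnormalized weighted least-squares problems, whose solutions, by linearity of $f$ and $g$, can be written in closed form as ratios of weighted second moments; for example with $f(x;\theta)=\theta x$,
\begin{equation*}
\tilde\theta_n = \frac{\sum_{i=2}^{n-1} \rho_i \rho_{i+1}\, g(\widehat J_{i-1}(\pi); \tilde\varphi_n)\, G_{i+1}}{\sum_{i=2}^{n-1} \rho_i \rho_{i+1}\, g(\widehat J_{i-1}(\pi); \tilde\varphi_n)^2}.
\end{equation*}

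\textbf{Identification and SLLN.} Using $\mathbb{E}_{\beta_i}[\rho_i \mid M_i] = 1$ and $\mathbb{E}_{\beta_i}[\rho_i G_i \mid M_i] = J_i(\pi)$, together with the double-IS identity of Theorem \ref{lemma:doubleIS}, I would show that the conditional means of the above numerator and denominator match the corresponding $\pi$-induced population moments $\mathbb{E}_\pi[g(J_{i-1}(\pi); \varphi^*) J_{i+1}(\pi)]$ and $\mathbb{E}_\pi[g(J_{i-1}(\pi); \varphi^*)^2]$, whose ratio equals $\theta_\pi$ by the IV-identification argument already used for Theorem \ref{thm:consistent} (invoking \thref{thm:cov} and \thref{ass:correlated} to handle the noise in $\widehat J_{i-1}$). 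A strong law of large numbers — exploiting the higher-order stationarity of $\{J_i(\pi)\}$ from \thref{ass:fixedf}, the square-integrability of each summand guaranteed by \thref{ass:support} plus boundedness of rewards, and the non-degeneracy of the denominator from \thref{ass:correlated} — then delivers almost-sure convergence of each weighted sample moment to its conditional mean, and continuous-mapping gives first $\tilde\varphi_n \overset{a.s.}{\to}\varphi^*$ and subsequently $\tilde\theta_n \overset{a.s.}{\to}\theta_\pi$.

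\textbf{Main obstacle.} The main technical hurdle is the SLLN step, because consecutive summands are dependent through the meta-transition and $\rho_i\rho_{i+1}$ couples two episodes. I would handle this by a martingale decomposition: conditioning on the filtration $\mathcal F_{i-1}$ generated by $(M_1,\ldots,M_{i-1}, H_1,\ldots,H_{i-1})$, each summand equals its $\mathcal F_{i-1}$-conditional mean plus a mean-zero martingale difference whose conditional variance is uniformly bounded under \thref{ass:support}; a martingale SLLN then applies. A secondary subtlety is that the self-normalizing constants $\tfrac{1}{n}\sum_j \rho_j$ and $\tfrac{1}{n}\sum_j \rho_j\rho_{j+1}$ must converge to strictly positive limits, but this follows from the same SLLN applied to sequences with conditional mean one, so the continuous-mapping step is well-defined in the limit.
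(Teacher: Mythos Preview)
Your overall scaffolding is the same as the paper's: drop the normalizing constants in $\bar\rho_i,\rho_i^\dagger$, write the linear closed form, use $\mathbb{E}[\rho_i\mid\mathcal F_{i-1}]=1$ together with the double-IS identity, apply an SLLN, and finish with continuous mapping. The martingale SLLN you propose is a perfectly good (arguably cleaner) substitute for the Rajchman SLLN for uncorrelated sequences that the paper invokes.

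There is, however, a genuine gap in your ``Identification'' step. You claim the numerator and denominator of your displayed ratio converge to the clean $\pi$-population moments $\mathbb{E}_\pi[g(J_{i-1}(\pi);\varphi^*)\,J_{i+1}(\pi)]$ and $\mathbb{E}_\pi[g(J_{i-1}(\pi);\varphi^*)^2]$. They do not. The instrument in both stages is built from the \emph{noisy} $\widehat J_{i-1}(\pi)$, not $J_{i-1}(\pi)$; for the denominator, conditioning on $\mathcal F_{i-1}$ removes the $\rho_i\rho_{i+1}$ weight but leaves $g(\widehat J_{i-1};\cdot)^2$, whose time average retains the IS-noise second moment (the $\mathbb N^\top\mathbb N$ term from \eqref{eqn:naiveasymptote}). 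So the separate limits you wrote are wrong, and their ratio is not $\theta_\pi$ for the reason you state.

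What actually makes the argument go through --- and what the paper's proof isolates explicitly --- is a \emph{cancellation} between the first and second stages. In matrix form, with $\Lambda_2=\mathrm{diag}(\rho_i)$ and $\bar\Lambda_2=\mathrm{diag}(\rho_i\rho_{i+1})$, the paper shows
\[
\tfrac{1}{n}\,X_1^\top \Lambda_2 X_1 \;\overset{a.s.}{\longrightarrow}\; \tfrac{1}{n}\,X_1^\top X_1
\quad\text{and}\quad
\tfrac{1}{n}\,X_1^\top \bar\Lambda_2 X_1 \;\overset{a.s.}{\longrightarrow}\; \tfrac{1}{n}\,X_1^\top X_1,
\]
using exactly the fact $\mathbb{E}[\rho_i\mid \widehat J_{i-1}]=\mathbb{E}[\rho_i\rho_{i+1}\mid \widehat J_{i-1}]=1$. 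After substituting $\tilde\varphi_n$ into $\tilde\theta_n$ and performing this replacement, the noisy-Gram terms cancel and one is left with $(X_1^\top X_2)^{-1}(X_1^\top \Lambda_2 X_3)$, i.e., precisely the unweighted IV formula of Theorem~\ref{thm:consistent}; Part~(b) of that proof then gives $\theta_\pi$. Your plan can be repaired by replacing the incorrect ``population moments'' identification with this cancellation step (your martingale SLLN handles it), after which you may simply cite Theorem~\ref{thm:consistent} rather than re-deriving IV identification from scratch.
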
 

\section{Empirical Analysis}
\begin{figure}[t]
    \centering
    \centering
    \includegraphics[width=0.32\textwidth]{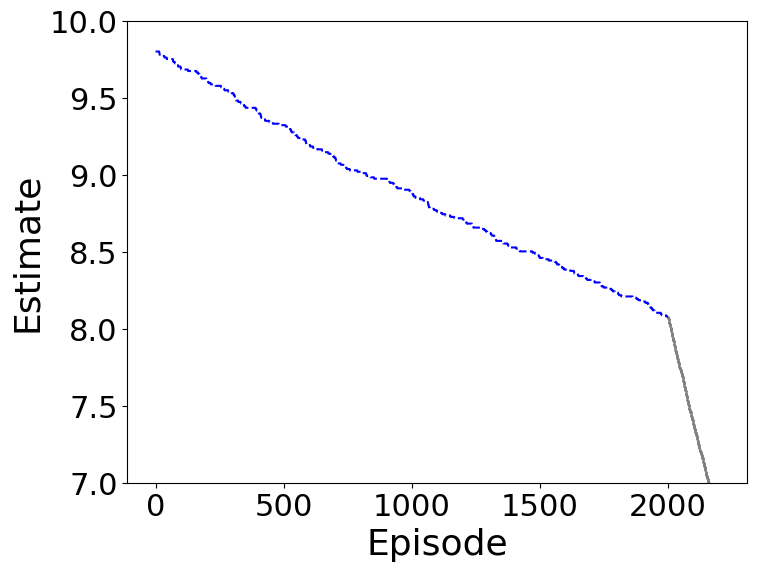}
    \includegraphics[width=0.32\textwidth]{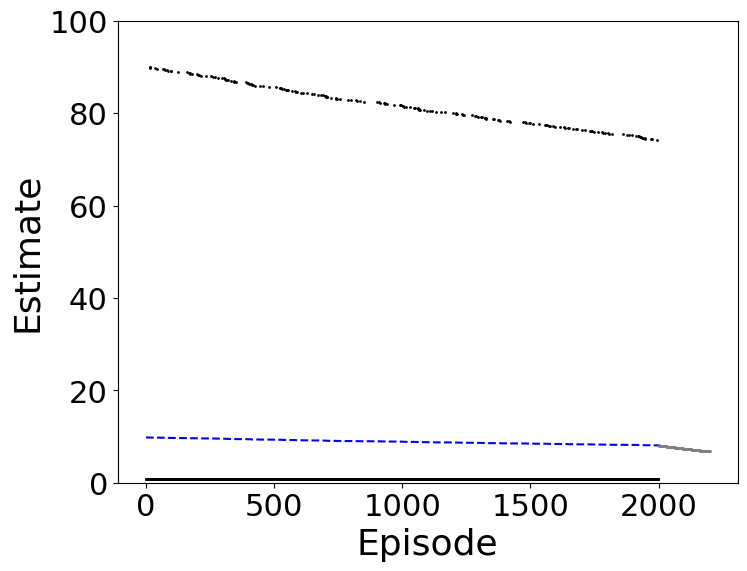}
    \includegraphics[width=0.32\textwidth]{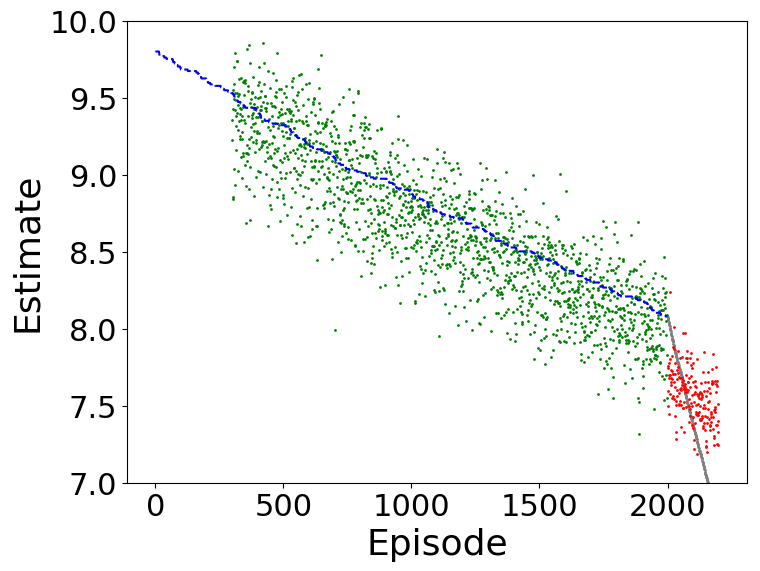}
    \\
    \includegraphics[width=0.9\textwidth]{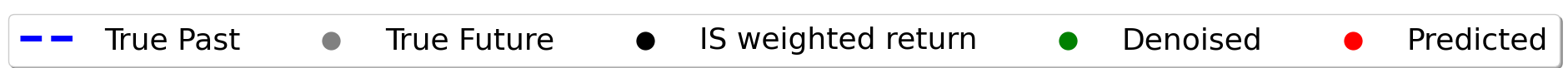}
    \caption{
An illustration of the stages in the proposed method for the RoboToy domain of Figure \ref{fig:badpassive}.
Here, evaluation policy $\pi$ chooses to `run' more often, whereas the data collecting  policy $\beta$ chooses to `walk' more often.
%
\textbf{(Left)} This results in a slow decline of performance for $\pi$ initially, followed by a faster decline once $\pi$ is deployed after episode $2000$. 
The blue and gray curves are unknown to the algorithm.
\textbf{(Middle)} OPEN first uses historical data to obtain counterfactual estimates of $J_i(\pi)$ for the past episodes.
    One can see the high-variance in these estimates \textbf{(notice the change in the y-scale)} due to the use of importance sampling.
\textbf{(Right)} Intuitively, before naively auto-regressing, OPEN first denoises past performance estimates using the first stage of IV regression (i.e., converts black dots to green dots). 
It can be observed that OPEN successfully denoises the importance sampling estimates.
Using these denoised estimates and a second use of counterfactual reasoning, OPEN performs the second stage of IV regression.
It is able to estimate that once $\pi$ is deployed, performances in the future will decrease more rapidly compared to what was observed in the past.
}
    \label{fig:activestep}
\end{figure}

This section presents both qualitative and quantitative empirical evaluations using several environments inspired by real-world applications that exhibit non-stationarity. 
In the following paragraphs, we first
briefly discuss different algorithms being compared and answer three primary questions.\footnote{Code is available at \href{https://github.com/yashchandak/activeNS}{https://github.com/yashchandak/activeNS}}

\textbf{1. OPEN: } We call our proposed method OPEN: \underline{o}ff-\underline{p}olicy \underline{e}valuation for \underline{n}on-stationary domains with structured passive, active, or hybrid changes.
It is based on our bias and variance reduced estimator developed in \eqref{eqn:gaa} and \eqref{eqn:s2aa}.
Appendix \ref{apx:algo} contains the complete algorithm.

\textbf{2. Pro-WLS: } For the baseline, we use Prognosticator with weighted least-squares (Pro-WLS) \citep{chandak2020optimizing}.
This method is designed to tackle only passive non-stationarity.

%

\textbf{3. WIS: } A weighted importance sampling based estimator that ignores presence of non-stationarity completely \citep{precup2000eligibility}.

\textbf{4. SWIS: } Sliding window extension of WIS which instead of considering all the data, only considers data from the recent past.
%


\clearpage
\textbf{\textit{Q1. (Qualitative Results) What is the impact of the two stages of the OPEN algorithm?}}

 In Figure \ref{fig:activestep} we present a step by step breakdown of the intermediate stages of a single run of OPEN on the RoboToy domain from Figure \ref{fig:badpassive}.
%
%
It can be observed that OPEN is able to extract the effect of the underlying active non-stationarity on the performances and also detect that the evaluation policy $\pi$ that `runs'  more often will cause an active harm, if deployed in the future.

\textbf{\textit{Q2. (Quantitative Results) What is the effect of different types and rates of non-stationarity?}}
\begin{figure}[t]
    \centering
    \includegraphics[width=0.24\textwidth]{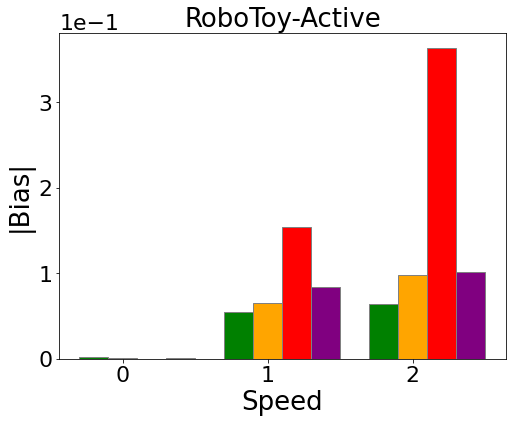}
    \includegraphics[width=0.24\textwidth]{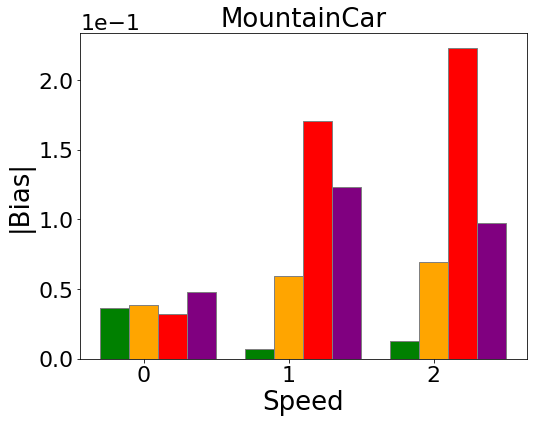}
    \includegraphics[width=0.24\textwidth]{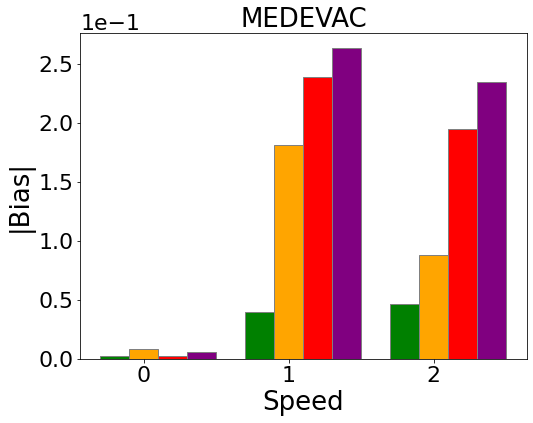}
    \includegraphics[width=0.24\textwidth]{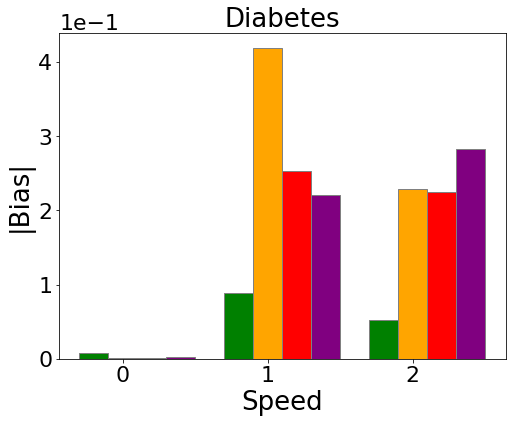}
    \\
    \includegraphics[width=0.24\textwidth]{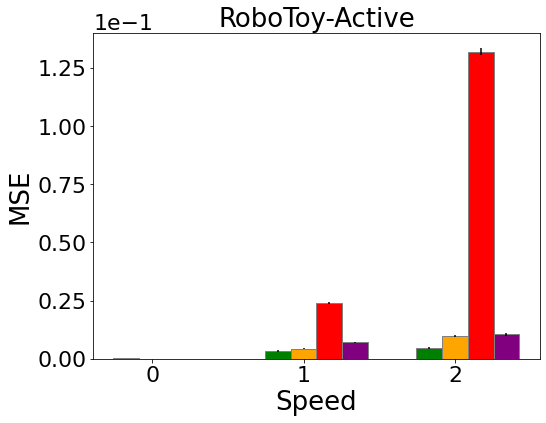}
    \includegraphics[width=0.24\textwidth]{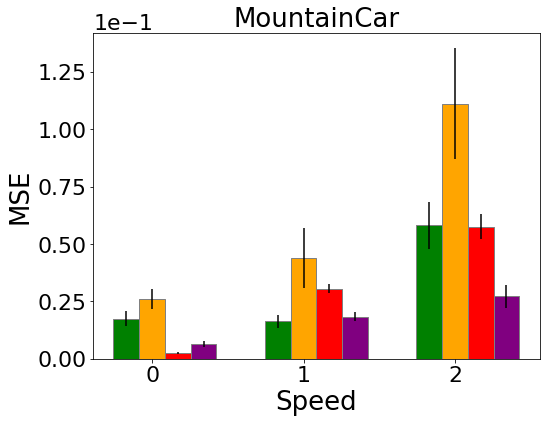}
    \includegraphics[width=0.24\textwidth]{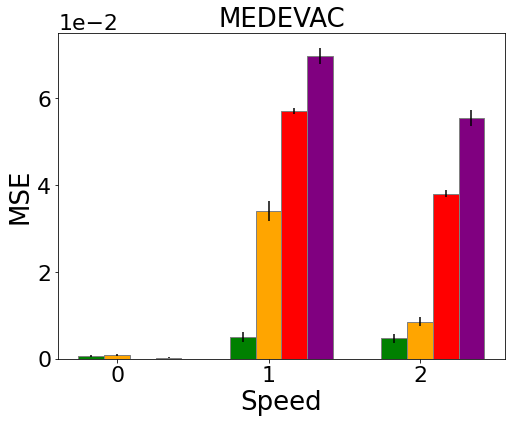}
    \includegraphics[width=0.24\textwidth]{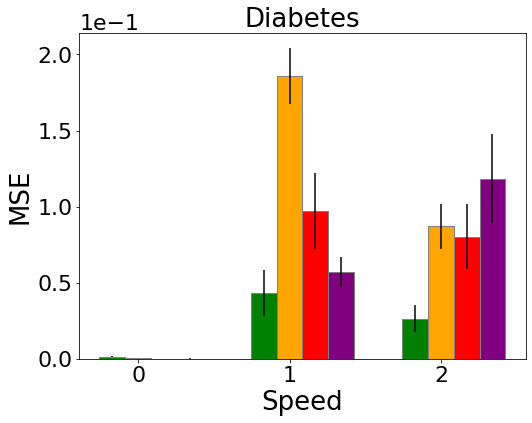}
    \\
    \includegraphics[width=0.55\textwidth]{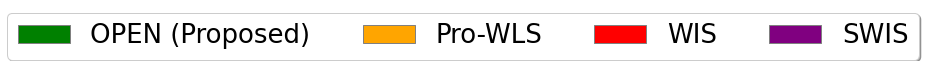} 
      \caption{Comparison of different algorithms for predicting the future performance of evaluation policy $\pi$ on domains that exhibit active/hybrid non-stationarity.
      On the x-axis is the speed which corresponds to the rate of non-stationarity; higher speed indicates faster rate of change and a speed of zero indicates stationary domain. 
      On the y-axes are the absolute bias \textbf{(Top row)} and the mean-squared error \textbf{(Bottom row)} of the predicted performance estimate \textit{(lower is better everywhere)}.
      For each domain, for each speed, for each algorithm, 30 trials were executed.
    }
    \label{fig:active_plots_bias}
\end{figure}

Besides the toy robot from Figure \ref{fig:badpassive}, we provide empirical results on three other domains inspired by real-world applications that exhibit non-stationarity.
Appendix \ref{sec:envdet} contains details for each, including how the evaluation policy and the data collecting policy were designed for them.

\textbf{Non-stationary Mountain Car: }
 In real-world mechanical systems, motors undergo wear and tear over time based on how vigorously they have been used in the past.
 To simulate similar performance degradation, we adapt the classic (stationary) mountain car domain \citep{SuttonBarto2}.
  We modify the domain such that after every episode the effective acceleration force is decayed proportional to the average velocity of the car in the current episode.
 This results in active non-stationarity, where the change in the system is based on the actions taken by the agent in the past.

\textbf{Type-1 Diabetes Management: }
Personalised automated healthcare systems for individual patients should account for the physiological and lifestyle changes of the patient over time.
To simulate such a scenario we use an open-source implementation \citep{simglucose} of the U.S. Food and Drug Administration (FDA) approved Type-1 Diabetes Mellitus simulator (T1DMS) \citep{man2014uva} for the treatment of Type-1 diabetes,
where we induced non-stationarity by oscillating the body parameters (e.g., rate of glucose absorption, insulin sensitivity, etc.) between two known configurations available in the simulator. 
This induces passive non-stationarity, that is, changes are not dependent on past actions.

\textbf{MEDEVAC: } This domain stands for \underline{med}ical \underline{evac}uation using air ambulances. This domain was developed by \citet{robbins2020approximate} for optimally routing air ambulances to provide medical assistance in regions of conflict.
Based on real-data, this domain simulates the arrival of different events, from different zones, where each event can have different priority levels.
Serving higher priority events yields higher rewards.
A good controller decides whether to deploy, and which MEDEVAC to deploy, to serve any event (at the risk of not being able to serve a new high-priority event if all ambulances become occupied).
%
%
Here, the arrival rates of different events can change based on external incidents during conflict. Similarly, the service completion rate can also change based on how frequently an ambulance is deployed in the past.
To simulate such non-stationarity, we oscillate the arrival rate of the incoming high-priority events, which induces passive non-stationarity.
Further, to induce wear and tear, we decay the service rate of an ambulance proportional to how frequently the ambulance was used in the past.
This induces active non-stationarity. 
The presence of both active and passive changes makes this domain subject to hybrid non-stationarity.

\label{sec:actbias}

Figure \ref{fig:active_plots_bias}  presents the (absolute) bias and MSE incurred by different algorithms for predicting the future performance of the evaluation policy $\pi$.
As expected, the baseline method WIS that ignores the non-stationarity completely fails to capture the change in performances over time.
Therefore, while WIS works well for the stationary setting, as the rate of non-stationarity increases, the bias incurred by WIS grows.
In comparison, the baseline method Pro-WLS that can only account for passive non-stationarity captures the trend better than WIS, but still performs poorly in comparison to the proposed method OPEN that is explicitly designed to handle active/hybrid non-stationarity.
Perhaps interestingly, for the Diabetes domain which only has passive non-stationarity, we observe that OPEN performs better than Pro-WLS.
As we discuss later, this can be attributed to the sensitivity of Pro-WLS to its hyper-parameters.


While OPEN incorporated one variance reduction technique,  it can be noticed when the rate of non-stationarity is high, variance can sometimes still be high thereby leading to higher MSE.
%
%
We discuss potential negative impacts of this in Appendix \ref{sec:FAQ}. %
Incorporating (partial) knowledge of the underlying model and developing doubly-robust version of OPEN could potentially mitigate variance further.
We leave this extension for future work.

\textbf{\textit{Q3. (Ablations Results) How robust are the methods to hyper-parameters?}}

Due to space constraints, we defer the empirical results and discussion for this to Appendix \ref{apx:ablation}.
Overall, we observe that the proposed method OPEN being an auto-regressive method can extrapolate/forecast better and is thus more robust to hyper-parameters (number of past terms to condition, as discussed in \thref{rem:p}) than Pro-WLS that uses Fourier bases for regression (where the hyper-parameter is the order of Fourier basis) and is not as good for extrapolation.

\section{Conclusion}
%
We took the first steps for addressing the fundamental question of off-policy evaluation under the presence of non-stationarity.
Towards this goal we discussed the need for structural assumptions and developed a model-free procedure OPEN and presented ways to mitigate its bias and variance.
Empirical results suggests that OPEN can now not only enable practitioners to predict future performances amidst non-stationarity but also identify  policies that may be actively causing harm or damage. 
In the future, OPEN can also be extended to enable \textit{control} of non-stationary processes.
%
%

\todo{Limitations}

\section{Acknowledgements}

Research reported in this paper was sponsored in part by a gift from Adobe, NSF award \#2018372.
This work was also funded in part by the U.S. Army Combat Capabilities Development Command (DEVCOM) Army Research Laboratory under Cooperative Agreement W911NF-17-2-0196 and Support Agreement No. USMA21050. The views expressed in this paper are those of the authors and do not reflect the official policy or position of the United States Military Academy, the United States Army, the Department of Defense, or the United States Government. The U.S.~Government is authorized to reproduce and distribute reprints for Government purposes notwithstanding any copyright notation herein.

\bibliography{mybib}
\bibliographystyle{abbrvnat}

\section*{Checklist}


\begin{enumerate}

\item For all authors...
\begin{enumerate}
  \item Do the main claims made in the abstract and introduction accurately reflect the paper's contributions and scope?
    \answerYes{}
  \item Did you describe the limitations of your work?
    \answerYes{}
  \item Did you discuss any potential negative societal impacts of your work?
    \answerYes{}
  \item Have you read the ethics review guidelines and ensured that your paper conforms to them?
    \answerYes{}
\end{enumerate}

\item If you are including theoretical results...
\begin{enumerate}
  \item Did you state the full set of assumptions of all theoretical results?
    \answerYes{}
        \item Did you include complete proofs of all theoretical results?
    \answerYes{}
\end{enumerate}

\item If you ran experiments...
\begin{enumerate}
  \item Did you include the code, data, and instructions needed to reproduce the main experimental results (either in the supplemental material or as a URL)?
    \answerYes{}
  \item Did you specify all the training details (e.g., data splits, hyperparameters, how they were chosen)?
    \answerYes{}
        \item Did you report error bars (e.g., with respect to the random seed after running experiments multiple times)?
    \answerYes{}
        \item Did you include the total amount of compute and the type of resources used (e.g., type of GPUs, internal cluster, or cloud provider)?
    \answerYes{}
\end{enumerate}

\item If you are using existing assets (e.g., code, data, models) or curating/releasing new assets...
\begin{enumerate}
  \item If your work uses existing assets, did you cite the creators?
    \answerYes{}
  \item Did you mention the license of the assets?
    \answerNA{}
  \item Did you include any new assets either in the supplemental material or as a URL?
    \answerNA{}
  \item Did you discuss whether and how consent was obtained from people whose data you're using/curating?
    \answerNA{}
  \item Did you discuss whether the data you are using/curating contains personally identifiable information or offensive content?
    \answerNA{}
\end{enumerate}

\item If you used crowdsourcing or conducted research with human subjects...
\begin{enumerate}
  \item Did you include the full text of instructions given to participants and screenshots, if applicable?
    \answerNA{}
  \item Did you describe any potential participant risks, with links to Institutional Review Board (IRB) approvals, if applicable?
    \answerNA{}
  \item Did you include the estimated hourly wage paid to participants and the total amount spent on participant compensation?
    \answerNA{}
\end{enumerate}

\end{enumerate}

\clearpage
\onecolumn
\setcounter{thm}{0}

\appendix

\begin{center}
    \Large
    \textbf{Off-Policy Evaluation for Action-Dependent \\ Non-Stationary Environments \\
    (Appendix)}
\end{center}

\etocdepthtag.toc{mtappendix}
\etocsettagdepth{mtchapter}{none}
\etocsettagdepth{mtappendix}{subsection}
\tableofcontents

\section{FAQs: Frequently Asked Questions}
\label{sec:FAQ}
    \subsection{How does the stationarity condition for a time-series differ from that in RL?\\
    }
    
    Conventionally, stationarity is the time-series literature refers to the condition where the distribution (or few moments) of a finite sub-sequence of random-variables in a time-series remains the same as we shift it along the time index axis \citep{cox2017theory}.
    In contrast, the stationarity condition in the RL setting implies that the environment is fixed \citep{SuttonBarto2}. This makes the performance $J(\pi)$ of any policy $\pi$ to be a constant value throughout.
    In this work, we use `stationarity' as used in the RL literature.
    \subsection{Can the POMDP during each episode (Figure \ref{fig:controlgraph}) itself be non-stationary?\\}

    Any source of non-stationarity can be incorporated in the (unobserved) state to induce another stationary POMDP (from which we can obtain a single sequence of interaction).
    The key step towards tractability is \thref{ass:fixedf} that enforces additional structure on the performance of any policy across the \textit{sequence} of (non-)stationary POMDPs. 

    \subsection{What if it is known ahead of time that the non-stationarity is passive only?\\}
    In such cases where the underlying changes are independent of the past actions, $\mathbb{E}_{\beta_1}[J_{i+1}(\pi)|J_{i}(\pi)] = \mathbb{E}_{\beta_2}[J_{i+1}(\pi)|J_{i}(\pi)]$, for any policies $\beta_1$ and $\beta_2$. 
    Therefore, there is no need for double-counterfactual reasoning to correct for the changes observed in the past.
    Particularly, in \thref{lemma:doubleIS} the second use of importance sampling can be avoided as $\mathbb{E}_{\beta_i,\beta_{i+1}}\left[\rho_i \widehat J_{i+1}(\pi) \middle| M_{i}(\pi)\right] = \mathbb{E}_{\beta_i,\beta_{i+1}}\left[ \widehat J_{i+1}(\pi) \middle| M_{i}(\pi)\right]$ under passive non-stationarity.
    Rest of the procedure for OPEN can be modified accordingly.
    
    \subsection{How should different non-stationarities be treated in the on-policy setting?\\}
    Perhaps interestingly, OPEN makes no effective distinction between active and passive non-stationarity in the on-policy setting.
    Notice that in the on-policy setting, importance ratios $\rho=1$ everywhere, therefore the use of double counterfactual reasoning has no impact.
    Intuitively, in the on-policy setting, there is no need to dis-entagle the active and passive sources of non-stationarity, as the prediction needs to be made about the same policy that was used during data collection.
    \subsection{Can you tell us more about when would \thref{ass:fixedf} be (in)valid?\\}
    Yes, we provide a detailed discussion on \thref{ass:fixedf} in Appendix \ref{sec:assumption}.

    \subsection{What are the limitations and potential negative impacts of the work?\\}
    Our work presents the first few steps towards off-policy evaluation in the presence of non-stationarity.
    Towards this goal, we used \thref{ass:fixedf} to enforce a higher-order stationarity condition. 
    We have provided extended discussion regarding the same in Appendix \ref{sec:assumption} and a practitioner should carefully analyze their problem setup to conclude if the assumption holds (at least approximately).

    Further, often off-policy evaluation is used in safety-critical settings, where it is important to provide confidence intervals \citep{thomas2015higha,thomas2019preventing,jiang2020minimax}. 
    Because of our use of instrument variables, our estimator may have high-variance.
    This can be explained by observing the closed form equation in \eqref{eqn:simpleIV2} obtained using the IV procedure. Here, $Z$ is the instrument variable and if it is weakly correlated with X (i.e,. $Z^\top X$ has a small magnitude) then $(Z^\top X)^{-1}$ can be large thereby increasing variance.
    However, our proposed method OPEN only provides point-estimates and thus using it as-is in safety critical settings would be irresponsible.

    If the application does exhibit non-stationarity, a practitioner may have to make a tough choice between prior methods that provide confidence intervals under the stationarity assumption, or the proposed method that may be applicable to their non-stationary  setting but does not provide any confidence intervals.

\section{Extended Related Work}
\label{sec:related}

In this section we discuss several different research directions that are relevant to the topic of this paper.
We refer the readers to the work by \citet{padakandla2020survey, khetarpal2020towards} for a more exhaustive survey.

\subsection{Off-policy evaluation in stationary domains}

In the off-policy RL setup, there is a large body of literature that tackles the off-policy estimation problem.
%
%
%
One line of work leverages dynamic programming \citep{puterman1990markov,SuttonBarto2} to develop off-policy estimators \citep{boyan1999least,sutton2008convergent,sutton2009fast,mahmood2014weighted,mahmood2015emphatic}.
Several recent approaches also build upon a dual perspective for dynamic programming \citep{puterman1990markov,wang2007dual,nachum2020reinforcement} for performing off-policy evaluation \citep{liu2018breaking, xie2019towards,jiang2020minimax,uehara2020minimax,dai2020coindice,feng2021nonasymptotic}. 
These works require fully-observable states.
Other direction of work takes Monte-Carlo perspective to perform trajectory based importance sampling and are applicable to stationary setting with partial observability \citep{precup2000eligibility,thomas2015higha,jiang2015doubly,thomas2016data}.
The proposed work builds upon this direction.

Several works have also discussed various techniques for variance reduction \citep{jiang2015doubly, thomas2016data,munos2016safe,harutyunyan2016q,liu2018breaking,espeholt2018impala, nachum2019dualdice,yang2020off,yuan2021sope}. 
However, these methods are restricted to stationary domains.
%

\subsection{Non-stationarity in stationary domains}

In the face of uncertainty, prior works often opt for exploratory or safe behavior by acting optimistically or pessimistically, respectively.
This is often achieved by using the collected data to dynamically modify the observed rewards for any state-action pair by either providing bonuses \citep{agarwal2020pc,taiga2021bonus} or penalties \citep{buckman2020importance,cetin2021learning}.
One could view this as an instance of active non-stationarity.
Similarly, in temporal-difference (TD) methods the target for the value function keeps changing and such changes are also dependent on the data collected in the past \citep{SuttonBarto2}. 
However, we note that such non-stationarities are only artifacts of the learning algorithm as the underlying domain remains stationary throughout.
In contrast, the focus of our work is on settings where the underlying domain is non-stationary.

\subsection{Single Episode Continuing setting} 
As discussed in Section \ref{sec:ass}, non-stationarity  can be alternatively modeled using a single long episode in a stationary POMDP.
From this point of view, one may wonder if the average-reward/continuing setting \citep{SuttonBarto2} could be useful?
While there have been off-policy evaluation methods designed to tackle the continuing setting \citep{liu2018breaking,nachum2019dualdice,yang2020off}, they require two important conditions that are no applicable for our setting:
\textbf{(a)} They assume access to the true underlying state such that there is no partial-observability, and (b) They assume that the transition tuples are sampled from the stationary state-visitation distribution of a policy.
In the non-stationary setting that we consider, we may not have data from any stationary state visitation distribution, and we may not have access to the true underlying states either.
%
%
%

\subsection{Non-stationarity in MDPs/Bandits}

Several prior methods have considered tackling non-stationarity for reinforcement learning problems.
For instance, a Hidden-Mode MDP is a setting that assumes that the environment changes are confined to a few hidden modes, where each mode represents a unique MDP. 
This provides a tractable way to model a limited number of MDPs 
\citep{choi2000environment,basso2009reinforcement},
or perform updates using mode-change detection  \citep{da2006dealing,padakandla2019reinforcement, alegre2021minimum}.
Similarly there are methods \citep{xie2020deep} based on hidden-parameter MDPs \citep{doshi2016hidden} that consider a more general setup where the hidden variable can be continuous.
Alternatively, many methods \citep{thomas2017predictive,jagerman2019when, chandak2020optimizing,zhou2020nonstationary,poiani2021meta,liotet2021lifelong} have considered time-dependent MDPs \citep{rachelson2009timdppoly}.
Aspects related to safety and confidence intervals have also been explored  \citep{ammar2015safe,chandak2020towards,chandak2021universal}.
However, the focus of these methods are on settings with passive non-stationarity, where the past actions do not influence the underlying non-stationarity.
Our works extends this direction of research to provide off-policy evaluation amidst active and hybrid non-stationarity as well.

Non-stationary multi-armed bandits (NMAB) capture the setting where the horizon length is one, but the reward distribution changes over time \citep{moulines2008,besbes2014stochastic,russac2019weighted,vernade2020non}.
%
%
%
Many variants of NMAB, like \textit{cascading non-stationary bandits} \citep{wang2019aware,li2019cascading} and  \textit{rotting bandits} \citep{levine2017rotting,seznec2018rotting} have also been considered.
In contrast, this work focuses on methods that generalize to the sequential decision making setup where the horizon length can be more than 1. 

\subsection{Multi-agent Games } 
Non-stationarity also occurs in multiplayer games \citep{singh2000nash,bowling2005convergence,conitzer2007awesome} where the opponent can change their strategy as a response to the agent's previous decisions.
These types of changes are related to active non-stationarity that we consider in this work. 
In such games, opponent modeling has been shown to be useful and regret bounds for multi-player games \citep{zhang2010multi,mealing2013opponent,foster2016learning, foerster2018learning}.
%
%
%
%
%
%
Further, often these games still assume that the underlying system/environment (excluding other players) is stationary and focus on searching for (Nash) equilibria.
Similarly, non-stationarities are also induced in the multi-agent systems where an agent tries to influence other agents \citep{jaques2019social,wang2019influence, xie2020learning,wang2021influencing}.
However, under general non-stationarity, the underlying system may also change and thus there may not even exist any fixed equilibria.
Perhaps a more relevant setting would be that of evolutionary/dynamics games, where the pay-off matrix and specification of the game can change over time \citep{gemp2017online, hennes2019neural}.
%
%
Such methods, however, do not leverage any underlying structure in how the game is changing nor do they account for settings where the changes might be a consequence of past interactions of the agent.
While relevant, these other research areas are distinct from our setting of interest.

\subsection{Dynamical Systems and Time-Series Analysis}

The proposed method for modeling the evolution of a policy's performance over time using stochastic estimates of past performances may be reminiscent of state-space methods (e.g., Kalman filtering) for dynamical systems \citep{hamilton1994state}.
However, in comparison to these methods,  we do not need to model noise variables, which could have been challenging in our case as noise is heteroskedastic because of past (off-policy) performance estimates being computed using data from different behavior policies. 
Further the form of OPEN estimator allows leveraging (accelerated) gradient descent based optimizers to obtain the solution instead of relying on computationally expensive closed-form solutions that are typically needed by state-space models. Due to this, in practice our method can also be used with non-linear functions $f$ (e.g., recurrent neural network based auto-regressive models). 

Different applications of time-series analysis have also discussed the use of lags as instruments \citep{achen2000lagged,  reed2015practice,bellemare2017lagged, wilkins2018lag, wang2019lagged}.
Our use case differs from these prior works in that we look at the full sequential decision making setup for reinforcement learning, and also consider a novel importance-weighted instrument-variable regression model.

\section{Discussion on the Structural Assumption}

\label{sec:assumption}

\thref{ass:fixedf} states that  $\forall m \in \mathcal M$ such that the performance $J(\pi)$ associated with $m$ is $j$, 
    \begin{align}
        \forall i,\, \Pr(J_{i+1}(\pi)=j_{i+1}| M_i=m; \pi) = \Pr(J_{i+1}(\pi)=j_{i+1}|J_i(\pi)=j; \pi). 
    \end{align}

As discussed earlier, consider a `meta-transition' function that characterizes $\Pr(J_{i+1}(\pi)|J_i(\pi),\pi')$ similar to how the standard transition function in an MDP characterizes $\Pr(S_{t+1}|S_t,A_t)$.
This assumption is imposing the following two conditions: \textbf{(a)} A \textit{higher-order stationarity} condition on the meta-transitions under which non-stationarity can result in changes over time, but \textit{the way the changes happen is fixed}, 
and \textbf{(b)} Knowing the past performance(s) of a policy $\pi$ provides \textit{sufficient} information for the meta-transition function to model how the performance will change upon executing any (possibly different) policy $\pi'$. 
We provide some examples in Figure \ref{fig:assex} to demonstrate few settings to discuss the applicability of this assumption.

\begin{figure}[h]
    \centering
    \includegraphics[width=0.3\textwidth]{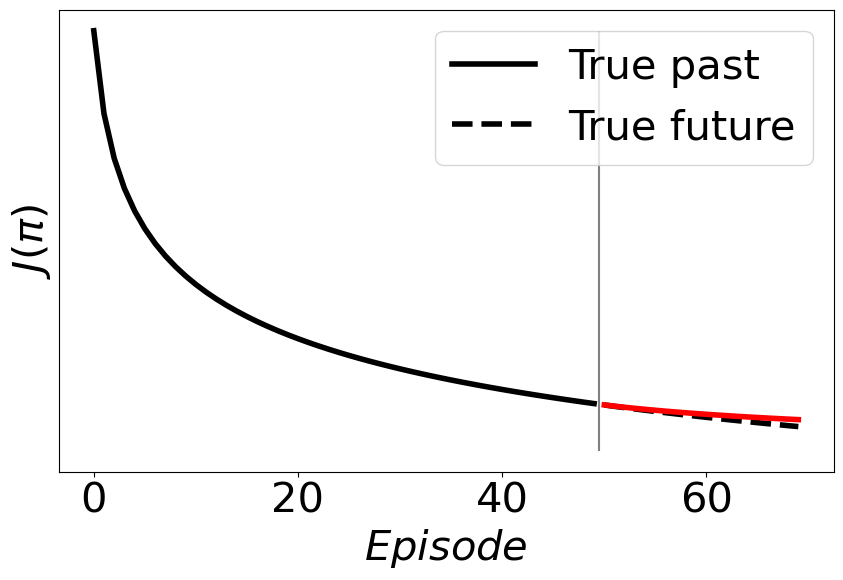}
    \includegraphics[width=0.3\textwidth]{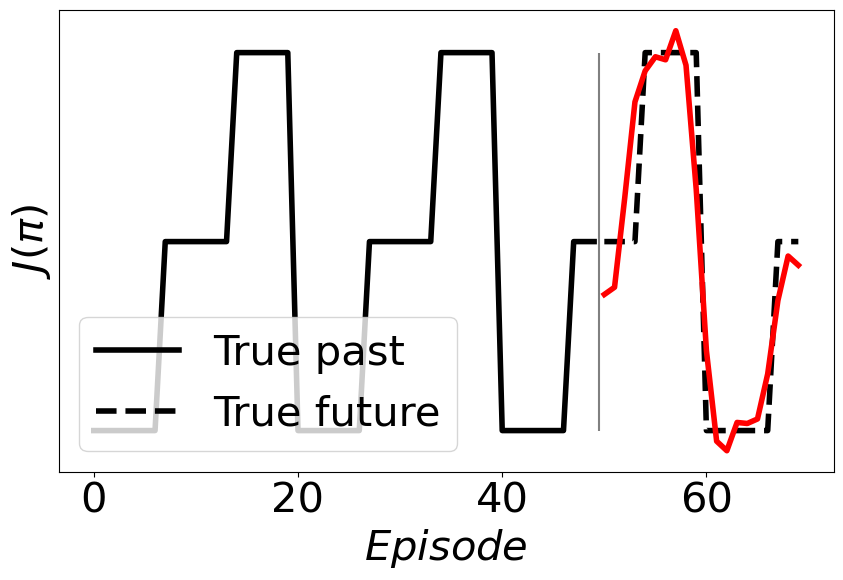}
    \includegraphics[width=0.3\textwidth]{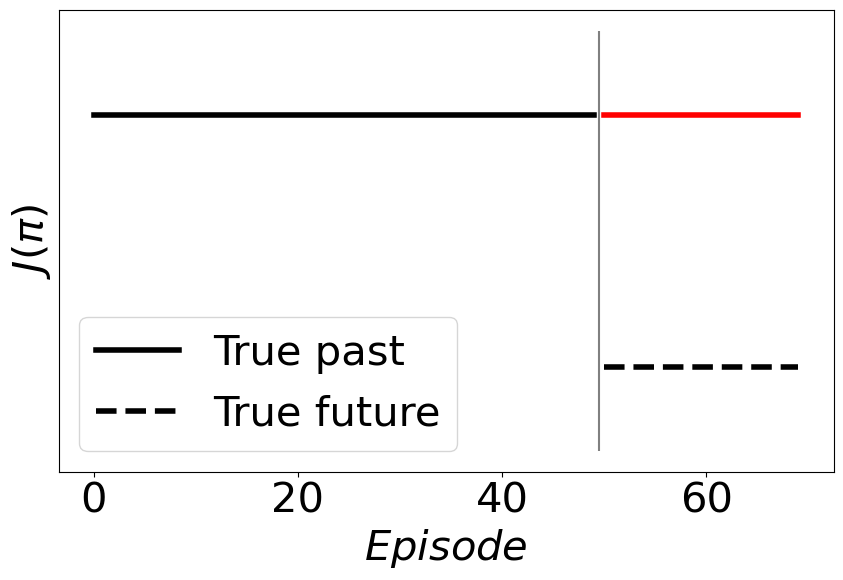}
    \caption{In this figure we plot different kinds of performance trends and discuss the applicability of \thref{ass:fixedf} for each. The red curve corresponds to the forecast obtained using an auto-regressive model. \textbf{(Left)} In many cases where the performance of a policy is smoothly changing over time (for e.g., drifts in interests of an user that a recommender system needs to account for), looking at the past performances can often provide indication of how the performance would evolve in the future.
    \textbf{(Middle)} Changes in performances does not necessarily have to be smooth. What \thref{ass:fixedf} enforces is that the changes have some structure which can be generalized to make predictions about how the performance would change in the future. Here, the performance jumps between different values (for e.g., if there is discontinuous change in the underlying system), but till their is some structure in the changes, it can be leveraged to make predictions about the future performances as well.
    \textbf{(Right)} While \thref{ass:fixedf} can be applicable in many setting, there can be settings where this assumption does not hold. For example, if a motor of an industrial system is degrading over time but this degradation has no effect on the observable performance, until the point when the motor breaks down and the performance drops completely. In such cases, just looking at past performances may not be sufficient to infer how performance will change in the future.
     }
    \label{fig:assex}
\end{figure}

\subsection{Latent Variables}
Instead of enforcing structure on the performances, a possible alternative could have been to enforce structure on how the underlying latent variable (e.g., friction of a motor, interests of a user) are changing over time.
While this might be more intuitive for some, just considering structure on this latent variable need not be sufficient.
Dealing with latent/hidden variables can particularly challenging in the off-policy setting, as it may often not be possible (unless additional assumptions are enforced) to infer the latent variable using just the observations from past interactions, even in the stationary setting \citep{tennenholtz2020off, namkoong2020off,shi2021minimax,bennett2021off}.

Further, the end goal is to estimate the performance of a policy in the future. Therefore, even if we could infer the possible latent variables for the future episodes, it would still require additional regularity conditions on the (unknown) function that maps from the latent variable to the performance associated with it for any given policy.
Without that it would not be possible to generalize what would the performance be for the inferred latent variables of the future.
And as we discuss in Figure \ref{fig:sine}, these two  assumptions on (a) the structure of how the latent variable could change, and (b) the regularity condition on how the latent variable impacts the performance, can often be reduced to a single condition directly on the structure of how the performances are changing.

\begin{figure}[t]
    \centering
    \begin{minipage}{.2\textwidth} 
        \centering
      \includegraphics[width=\textwidth]{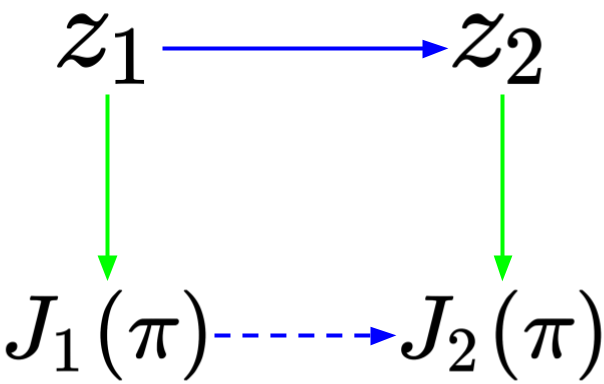}
    \end{minipage}
      \hspace{50pt}
    \begin{minipage}{.3\textwidth}
    \centering
     \includegraphics[width=\textwidth]{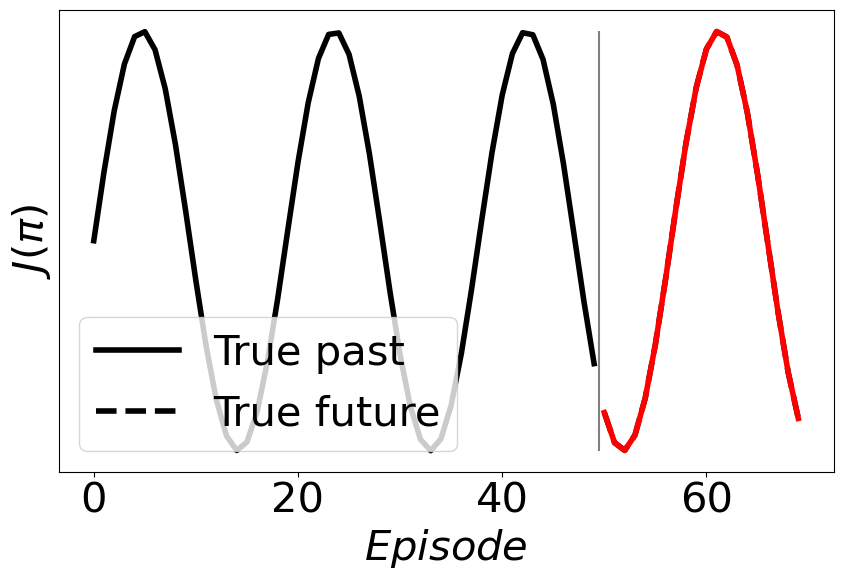}
    \end{minipage}
      %
    \caption{ \textbf{(Left)}
    Considering structured changes in latent variable $z$ (\textcolor{blue}{blue} arrow) of the POMDP might often be more intuitive. 
    However, as $J(\pi)$ estimation is required ultimately, unless performance of a policy also has some structure (\textcolor{green}{green} arrows) given $z$,  generalizing across (potentially unseen) $z$'s may not be possible.
    Structured changes for \textcolor{blue}{blue} and \textcolor{green}{green} arrows consequently results in structured changes in $J(\pi)$ (\textcolor{blue}{dashed-blue} arrows).
    For example, if the performance $J(\pi)$ of a policy changes (Lipschitz) smoothly with $z$, 
    then (Lipschitz) smooth changes between $z$ values automatically also imply (Lipschitz) smooth changes between $J(\pi)$ values.
    \textbf{(Right)} When executing a policy $\pi$, say $z$ changes as $z_i=i$, and $J_i(\pi)$ changes periodically as $\sin(z_i)$.
    Here, even though both $z$ and $J$ change smoothly, changes in $z_{i+1}$ can be modeled using one past term (i.e, $z_{i}$) as given just the current performance value $J_{i}(\pi)$ it is not possible to predict whether the next performance $J_{i+1}(\pi)$ would increase or decrease in the future.
    However, such a problem can be easily resolved by looking at multiple past performances to infer the trend (for e.g., just using past $2$ terms here suffices to exactly predict the future outcomes({\color{red}red} curve)).
    %
    %
    }
    \label{fig:sine}
\end{figure}

\section{Proofs for Theoretical Results}
\label{apx:proofs}

\subsection{Double Counterfactual Reasoning}
\label{apx:proofdis}
\doubleISthm*

\begin{proof}

In the following, to make the dependence of trajectories explicit, we will additionally define $\rho(h)$ and $g(h)$ to be the importance ratios and the return associated with a trajectory $h$. 
Using this notation, it can be observed that,
\begin{align}
     \mathbb{E}_{\pi}\left[J_{i+1}(\pi) |M_i\right] 
          &= \sum_{h_{i+1}} \Pr(h_{i+1}|M_i;\pi) g(h_{i+1})
     \\
     &\overset{(a)}{=} \sum_{h_{i+1}}\sum_{m_{i+1}}\sum_{h_{i}} \Pr(h_{i+1}, m_{i+1}, h_{i}|M_i; \pi) g(h_{i+1})
    \\
     &\overset{(b)}{=} 
        \sum_{h_{i}} \Pr(h_i|M_i;\pi)
        \sum_{m_{i+1}} \Pr(m_{i+1}|h_i,M_i;\pi)
        \\
        &\quad\quad 
        \sum_{h_{i+1}} \Pr(h_{i+1}|m_{i+1}, h_{i}, M_i; \pi) g(h_{i+1})
    \\
     &\overset{(c)}{=} \sum_{h_{i}} \Pr(h_i|M_i;\pi)
        \sum_{m_{i+1}} \Pr(m_{i+1}|h_i,M_i)
        \sum_{h_{i+1}} \Pr(h_{i+1}|m_{i+1}; \pi) g(h_{i+1})
    \\
     &\overset{(d)}{=} \sum_{h_{i}} \rho(h_i) \Pr(h_i|M_i;\beta_k)
        \sum_{m_{i+1}} \Pr(m_{i+1}|h_i,M_i)
        \\
        &\quad\quad 
        \sum_{h_{i+1}} \rho(h_{i+1})\Pr(h_{i+1}|m_{i+1}; \beta_{i+1}) g(h_{i+1})
    \\
     &\overset{(e)}{=} \sum_{h_{i}}\sum_{m_{i+1}}\sum_{h_{i+1}}             
            \Pr(h_i|M_i;\beta_i)
            \Pr(m_{i+1}|h_i,M_i)
            \Pr(h_{i+1}|m_{i+1}; \beta_{i+1})
        \Big[\rho(h_i)\rho(h_{i+1}) g(h_{i+1})\Big]
    \\
     &= \mathbb{E}_{\beta_i \beta_{i+1}}\left[\rho_i \rho_{i+1} G_{i+1} |M_i\right]
     \\
     &= \mathbb{E}_{\beta_i \beta_{i+1}}\left[\rho_i \widehat J_{i+1}(\pi) |M_i\right], \label{eqn:doubleIS}
\end{align}
where (a) follows from the law of total probability, (b) follows from the chain rule of probability, (c) follows using conditional independence, where $m_{i+1}$ is independent of $\pi$ given $h_i$ and $M_i$ because of the meta-transition function $\mathcal T$, and $h_{i+1}$ i independent of $h_i$ and $M_i$ given $m_{i+1}$ and $\pi$, (d) follows from the use of importance sampling to switch the sampling distribution under \thref{ass:support}, and (e) follows from re-arrangement of terms.
Finally, $\rho_i$ and $\rho_{i+1}$ are the random variables corresponding the importance ratios in episodes $i$ and $i+1$.
Random variable $G_{i+1}$ corresponds to the return under $\beta$ in episode $i+1$.

Now notice that
\begin{align}
     \mathbb{E}_{\pi}\left[J_{i+1}(\pi) |M_i\right] &= \sum_{y \in \mathbb R} \Pr(J_{i+1}(\pi)=y|M_i;\pi) y
     \\
     &\overset{(f)}{=}  \sum_{y \in \mathbb R} \Pr(J_{i+1}(\pi)=y|J_i(\pi);\pi) y
     \\
     &= \mathbb{E}_{\pi}\left[J_{i+1}(\pi) |J_i(\pi)\right], \label{eqn:doubleIS2}
\end{align}
where $(f)$ follows from \thref{ass:fixedf}.
Finally, combining \eqref{eqn:doubleIS} and \eqref{eqn:doubleIS2},
\begin{align}
    \mathbb{E}_{\pi}\left[J_{i+1}(\pi) |J_i(\pi)\right]
     = \mathbb{E}_{\beta_i,\beta_{i+1}}\left[\rho_i \widehat J_{i+1}(\pi) \middle| M_{i}\right].
\end{align}
%
\end{proof}

Similarly, under a more generalized \thref{ass:fixedf}, where
    $\forall m \in \mathcal M$, 
    \begin{align}
        \forall i>p, \, \Pr(J_{i+1}(\pi)=j_{i+1}| M_i=m; \pi') = \Pr(J_{i+1}(\pi)=j_{i+1}|(J_{i-k}(\pi)=j_{i-k})_{k=0}^p; \pi').
    \end{align}
then similar steps as earlier can be used to conclude that
\begin{align}
    \mathbb{E}_{\pi}\left[J_{i+1}(\pi) |(J_{i-k}(\pi))_{k=0}^p\right] 
     &= \mathbb{E}_{\beta_i \beta_{i+1}}\brr{\rho_i \widehat J_{i+1}(\pi) |M_i}. 
    \label{eqn:diss2}
\end{align}
Note that no additional importance correction is needed in \eqref{eqn:diss2} compared to \eqref{eqn:doubleIS2}. The term $\rho_i$ only shows up to correct for the transition between $M_i$ and $M_{i+1}$ due to the meta-transition function $\mathcal T(m,h,m')=\Pr(M_{i+1}{=}m'|M_i{=}m,H_i{=}h)$.
This independence on the choice of $p$ also holds if $\mathcal T$ is non-Markovian in the previous $M_i$ values.
Although, additional importance correction would be required if $\mathcal T$ is dependent on multiple past $H_i$ terms.

\subsection{Asymptotic bias of \emph{$\hat \theta_{naive}$}}
\label{proof:bias}

Recall that $\hat \theta_\text{naive}$ is given by,
\begin{align}
        \hat \theta_\texttt{naive} &\in \argmin_{\theta \in \Theta} \,\, \sum_{i=1}^{n-1}\left( f\left(\widehat J_{i}(\pi); \theta\right) -  \rho_{i}\widehat J_{i+1}(\pi) \right)^2. 
\end{align}
Because $\widehat J_i(\pi)$ is an unbiased estimate of $J_{\pi}$, let $\widehat J_i(\pi) = J_i(\pi) + \eta_i$, where $\eta_i$ is a mean zero noise.
Let $\mathbb N \coloneqq [\eta_1, \eta_2, ..., \eta_{n-1}]^\top$ and $
    \mathbb J \coloneqq [J_1(\pi), J_2(\pi), ..., J_{n-1}(\pi)]^\top$.
When $f$ is a linear function of its inputs, expected value $\mathbb E_\pi\brr{J_{i+1}(\pi)|J_{i}(\pi)} = J_i \theta_\pi$.
Also, as $\rho_i \hat J_{i+1}(\pi)$ is an unbiased estimator for $J_i(\pi) \theta_\pi$ given $J_i(\pi)$, let $\rho_i \hat J_{i+1}(\pi)=J_i(\pi) \theta_\pi + \zeta_i$, where $\zeta_i$ is mean zero noise. 
Let $\mathbb N_2 \coloneqq [\zeta_1, \zeta_2, ..., \zeta_{n-1}]^\top$ then
$\theta_\texttt{naive}$ can be expressed as,
\begin{align}
    \hat \theta_\text{naive} &= \br{\br{\mathbb J + \mathbb N}^\top \br{\mathbb J + \mathbb N}}^{-1} \br{\mathbb J + \mathbb N}^\top \br{\mathbb J \theta_\pi + \mathbb N_2}
    \\
    &= \br{\mathbb J^\top\mathbb J + 2\mathbb J^\top\mathbb N + \mathbb N^\top\mathbb N}^{-1} \br{\mathbb J^\top \mathbb J \theta_\pi + \mathbb N^\top\mathbb J \theta_\pi + \mathbb J^\top\mathbb N_2 + \mathbb N^\top\mathbb N_2}
    \\
    &= \br{\frac{1}{n}\br{\mathbb J^\top\mathbb J + 2\mathbb J^\top\mathbb N + \mathbb N^\top\mathbb N}}^{-1} \br{\frac{1}{n}\br{\mathbb J^\top \mathbb J \theta_\pi + \mathbb N^\top\mathbb J \theta_\pi + \mathbb J^\top\mathbb N_2 + \mathbb N^\top\mathbb N_2}} \label{eqn:biaseq}.
\end{align} 
In the limit, using continuous mapping theorem when the inverse in \eqref{eqn:biaseq} exists,
\begin{align}
    \lim_{n\rightarrow\infty}\hat \theta_\text{naive} &= \br{\lim_{n\rightarrow\infty} \frac{1}{n}\br{ \mathbb J^\top\mathbb J + 2\mathbb J^\top\mathbb N + \mathbb N^\top\mathbb N}}^{-1} \br{\lim_{n\rightarrow\infty} \frac{1}{n} \br{\mathbb J^\top \mathbb J \theta_\pi + \mathbb N^\top\mathbb J \theta_\pi + \mathbb J^\top\mathbb N_2 + \mathbb N^\top\mathbb N_2}}. \label{eqn:biaseq2}
\end{align}
Observe that both $\mathbb N$ and $\mathbb N_2$ are mean zero and uncorrelated  with each other and also with $\mathbb J$.
Therefore, the terms corresponding to $\mathbb J^\top \mathbb N$, $\mathbb J^\top \mathbb N_2$, and $\mathbb N^\top \mathbb N_2$ in \eqref{eqn:biaseq2} will be zero almost surely due to Rajchaman's strong law of large numbers for uncorrelated random variables \citep{rajchman1932zaostrzone,chandra1991extensions}.
However, the term corresponding to $\mathbb N^\top \mathbb N$ will not be zero in the limit, and instead roughly result in (average of the) variances of $\eta_i$.
Consequently, this results in,
\begin{align}
    \hat \theta_\texttt{naive} \overset{a.s.}{\longrightarrow} \left(\mathbb J^\top \mathbb J + \mathbb N ^\top \mathbb N \right)^{-1} \mathbb{J}^\top \mathbb J \theta_\pi.
\end{align}

\subsection{Importance-Weighted IV-Regression}
\label{apx:proofiwiv}
\covthm*

\begin{proof}
    \begin{align}
        \forall i, \, \operatorname{Cov}\left(\widehat J_i(\pi), \widehat J_{i+1}(\pi) - J_{i+1}(\pi)\right) &= \underbrace{\mathbb{E}_\beta\left[\widehat J_i(\pi) \left(\widehat J_{i+1}(\pi) - J_{i+1}(\pi)\right) \right]}_{\text{(I)}} 
        \\
        &\quad - \underbrace{\mathbb{E}_\beta\left[\widehat J_i(\pi) \right]\mathbb E_\beta\left[ \widehat J_{i+1}(\pi) - J_{i+1}(\pi) \right]}_{\text{(II)}}. \label{eqn:apx:cov}
    \end{align}
    Focusing on term (II),
    \begin{align}
        \mathbb{E}_\beta\left[\widehat J_i(\pi) \right]\mathbb E_\beta\left[ \widehat J_{i+1}(\pi) - J_{i+1}(\pi) \right] &= \mathbb{E}_\beta\left[\widehat J_i(\pi) \right]\left(\mathbb E_\beta\left[ \widehat J_{i+1}(\pi) \right] - J_{i+1}(\pi) \right)
        \\
         &\overset{(a)}{=}\mathbb{E}_\beta\left[\widehat J_i(\pi) \right]\left( J_{i+1}(\pi) - J_{i+1}(\pi) \right)
         \\
         &=0,
    \end{align}

where (a) follows from the fact that under \thref{ass:support}, $\widehat J_{i+1}(\pi)$ is an unbiased estimator for $J_{i+1}(\pi)$ \citep{thomas2015safe}.
    Focusing on term (I) and using the law of total expectation,
    \begin{align}
        \mathbb{E}_\beta\left[\widehat J_i(\pi) \left(\widehat J_{i+1}(\pi) - J_{i+1}(\pi)\right) \right] &= \mathbb{E}_\beta\Big[\widehat J_i(\pi) \underbrace{\mathbb E_\beta \left[\widehat J_{i+1}(\pi) - J_{i+1}(\pi)\middle| \widehat J_i(\pi) \right]}_{\text{(III)}} \Big].
        %
        %
    \end{align}
    Expanding term (III) further using the law of total expectation,
    \begin{align}
    \mathbb E_\beta \left[\widehat J_{i+1}(\pi) - J_{i+1}(\pi)\middle| \widehat J_i(\pi) \right] &\overset{(b)}{=}     \mathbb E_\beta \left[\mathbb E_{\beta}\left[\widehat J_{i+1}(\pi) - J_{i+1}(\pi)\middle| M_{i+1}, \widehat J_i(\pi) \right] \middle| \widehat J_i(\pi) \right] 
    \\
    &\overset{(c)}{=}     \mathbb E_\beta \left[\mathbb E_{\beta}\left[\widehat J_{i+1}(\pi) - J_{i+1}(\pi)\middle| M_{i+1} \right] \middle| \widehat J_i(\pi) \right] 
    \\
    &\overset{(d)}{=} 0,
    \end{align}
    where in (b) the outer expectation is over the next environment $M_{i+1}$ given that the current performance estimate is $\widehat J_i(\pi)$ and that $\beta_i$ was used for interaction in episode $i$. The inner expectation is over $\widehat J_{i+1}(\pi)$, where the trajectory used for estimating $\widehat J_{i+1}(\pi)$ is collected using $\beta$ in the environment $M_{i+1}$.
    Step (c) follows from the fact that conditioned on the environment $M_{i+1}$, interactions in $M_{i+1}$ are independent of quantities observed in the episodes before $i+1$.
    Finally, step (d) follows from observing that
    \begin{align}
        \mathbb E_{\beta}\left[\widehat J_{i+1}(\pi) - J_{i+1}(\pi)\middle| M_{i+1} \right] &= \mathbb E_{\beta}\left[\widehat J_{i+1}(\pi)\middle| M_{i+1} \right] -  J_{i+1}(\pi)
        \\
        &\overset{(e)}{=} J_{i+1}(\pi) -  J_{i+1}(\pi)
        \\
        &= 0,
    \end{align}
    where (e) follows from the fact that under \thref{ass:support}, $\widehat J_{i+1}(\pi)$ is an unbiased estimator of the performance of $\pi$ for the given environment $M_{i+1}$.
    Therefore both (a) and (b) in \eqref{eqn:apx:cov} are zero, and we conclude the result.
\end{proof}

\consistencythm*

\begin{proof}
For the linear setting, $\hat \theta_n$ can be expressed as,
\begin{align}
    \hat \phi_n &\in \argmin_{\phi \in \Phi} \,\, \sum_{i=2}^{n-1}\left( \widehat J_{i-1}(\pi) \phi - \widehat J_{i}(\pi) \right)^2. \label{apx:s1}
    \\
    \hat \theta_n &\in \argmin_{\theta \in \Theta} \,\, \sum_{i=2}^{n-1}\left( \bar J_{i}(\pi) \theta - \rho_{i}\widehat J_{i+1}(\pi) \right)^2, & \text{where} \quad \bar J_{i} \coloneqq \widehat J_{i-1}\hat \phi_n. \label{apx:s2}
\end{align}

%

Before moving further, we introduce some additional notations.
Particularly, we will use matrix based notations such that it provides more insights into how the steps would work out for other choices of instrument variables as well. 
\begin{align}
    \bf X_{1} &\coloneqq \left[\widehat J_1(\pi), ..., \widehat J_{n-2}(\pi)\right]^\top, 
    %
    & \bf \Lambda_{1} &\coloneqq \texttt{diag}([\rho_1, ..., \rho_{n-2}]), 
    \\
    \bf X_{2} &\coloneqq \left[ \widehat J_{2}(\pi), ...,  \widehat J_{n-1}(\pi)\right]^\top,
    &\bf \Lambda_{2} &\coloneqq \texttt{diag}\left(\left[\rho_2, ..., \rho_{n-1}\right]\right), 
    \\
    \bf X_{3} &\coloneqq \left[ \widehat J_{3}(\pi), ...,  \widehat J_{n}(\pi)\right]^\top 
    & \bf \bar X_{2} &\coloneqq \left[\bar J_2(\pi), ..., \bar J_{n-1}(\pi)\right]^\top,
\end{align}
where the $\texttt{diag}$ corresponds to a diagonal matrix with off-diagonals set to zero.

In the following, we split the proof in two parts: (a) we will first show that 
    \begin{align}
        \hat \theta_n 
        &= \left(\bf X_{1}^\top \bf X_{2} \right)^{-1} \left(\bf X_{1} ^\top  \bf \Lambda_{2}  \bf X_{3}\right), 
    \end{align}
and then (b) using this simplified form for $\hat \theta_n$ we will show that $\hat \theta_n   \overset{\text{a.s.}}{\longrightarrow} \theta_\pi$.

\paragraph{Part (a)} Solving \eqref{apx:s1} in matrix form,
    \begin{align}
        \hat \phi_n &=  \left(\bf X_{1}^\top \bf X_{1} \right)^{-1} \bf X_{1}^\top \bf \bf X_{2}. \label{eqn:phihat}
    \end{align}
Similarly, solving \eqref{apx:s2} in matrix form,
    \begin{align}
        \hat \theta_n &= \left(\bf \bar  X_{2}^\top \bf \bar X_{2} \right)^{-1} \bf \bar X_{2}^\top \bf  \Lambda_{2}  \bf X_{3}. \label{eqn:hattheta}
    \end{align}
    Now substituting the value of $\bf \bar X_{2}$ in \eqref{eqn:hattheta},
    \begin{align}
        \hat \theta_n &= \left(\left(\underbrace{\bf X_{1} \hat \phi_n}_{\bf \bar X_{2}}\right)^\top \left(\underbrace{\bf X_{1} \hat \phi_n}_{\bf \bar X_{2}}\right) \right)^{-1} \left(\underbrace{\bf X_{1} \hat \phi_n}_{\bf \bar X_{2}}\right)^\top \bf  \Lambda_{2}  \bf X_{3}. \label{eqn:sub1}
    \end{align}
     Using \eqref{eqn:phihat} to substitute the value of $\hat \phi_n$ in \eqref{eqn:sub1},
    \begin{align}
        \hat \theta_n &= \left( \left(\bf X_{1} \underbrace{\left(\bf X_{1}^\top \bf X_{1} \right)^{-1} \bf X_{1}^\top \bf X_{2}}_{\hat \phi_n} \right)^\top \left(\bf X_{1} \underbrace{\left(\bf X_{1}^\top \bf X_{1} \right)^{-1} \bf X_{1}^\top  \bf X_{2}}_{\hat \phi_n}\right) \right)^{-1}
        \\
        & \quad \left(\bf X_{1} \underbrace{\left(\bf X_{1}^\top \bf X_{1} \right)^{-1} \bf X_{1}^\top \bf X_{2}}_{\hat \phi_n} \right)^\top \bf  \Lambda_{2}  \bf X_{3}. \label{eqn:trans}
    \end{align}
    Using matrix operations to expand the transposes in \eqref{eqn:trans},
    \begin{align}
        \hat \theta_n &= \left(\left(\mul{red}{\bf X_{2}^\top  \bf X_{1}} \left(\mul{blue}{\bf X_{1}^\top \bf X_{1}} \right)^{-1} \mul{green}{\bf X_{1}^\top} \right)\left(\mul{green}{\bf X_{1}} \left(\mul{purple}{\bf X_{1}^\top \bf X_{1}} \right)^{-1} \mul{brown}{\bf X_{1}^\top  \bf X_{2}} \right)  \right)^{-1}
        \\
        & \quad \left(\bf X_{2}^\top \bf X_{1} \left(\bf X_{1}^\top \bf X_{1} \right)^{-1} \bf X_{1} ^\top \right) \bf  \Lambda_{2}  \bf X_{3}. \label{eqn:tinv}
    \end{align}
    Similarly, using matrix operations to expand inverses in \eqref{eqn:tinv} (colored underlines are used to match the terms before expansion in \eqref{eqn:tinv} and after expansion in \eqref{eqn:cut}),
    \begin{align}
        \hat \theta_n &= \left(\mul{brown}{\bf X_{1}^\top  \bf X_{2}} \right)^{-1}  \left(\mul{purple}{\bf X_{1}^\top \bf X_{1}} \right) \left(\mul{green}{\bf X_{1}^\top \bf X_{1}} \right)^{-1} \left(\mul{blue}{\bf X_{1}^\top \bf X_{1} }\right)  \left(\mul{red}{\bf X_{2}^\top  \bf X_{1}} \right)^{-1}
        \\
        & \quad \left(\bf X_{2}^\top  \bf X_{1} \right) \left(\bf X_{1}^\top \bf X_{1} \right)^{-1} \left(\bf X_{1} ^\top  \bf  \Lambda_{2}  \bf X_{3}\right),
        \label{eqn:cut}
    \end{align}
    Notice that several terms in \eqref{eqn:cut} cancel each other out, therefore,
    \begin{align}
        \hat \theta_n 
        &= \left(\bf X_{1}^\top \bf X_{2} \right)^{-1} \left(\bf X_{1} ^\top  \bf  \Lambda_{2}  \bf X_{3}\right).  \label{eqn:simpleIV}
    \end{align}

As a side remark, we note that if we replace $\bf X_{1}$ in the above steps with an appropriate instrument variable $\bf Z_{1}$, then similar steps will follow
and will result in
    \begin{align}
        \hat \theta_n 
        &= \left(\bf Z_{1}^\top \bf X_{2} \right)^{-1} \left(\bf Z_{1} ^\top  \bf  \Lambda_{2}  \bf X_{3}\right). \label{eqn:simpleIV2}
    \end{align}

\paragraph{Part (b)} Now when $f(J_i(\pi); \theta_\pi) \coloneqq \mathbb{E}_{\pi}\left[J_{i+1}(\pi) |J_i(\pi)\right]$ is a linear function, 
\begin{align}
    J_{i+1}(\pi) &=   J_i(\pi)\theta_\pi +   U_{i+1}(H_i),
\end{align}
where $U_{i+1}$ is a bounded mean zero noise (which depends on the interaction $H_i$ by $\pi$).
Using \thref{lemma:doubleIS}, let $$Y_{i+1} \coloneqq \mathbb{E}_\pi\left[J_{i+1}(\pi)\middle| J_i(\pi)\right]$$ and its unbiased estimate be
\begin{align}
 \widehat Y_{i+1} \coloneqq \rho_{i} \widehat J_{i+1}(\pi) = \rho_i \rho_{i+1} G_{i+1}. \label{eqn:SEMs}
\end{align}
For the regression, since $\widehat J_i(\pi)$ is an unbiased estimate of the input $J_i(\pi)$ and $\widehat Y_{i+1}$ is an unbiased estimate of the target $\mathbb{E}_\pi\left[J_{i+1}(\pi)\middle| J_i(\pi)\right]$, these can be equivalently expressed as, 
\begin{align}
    \widehat J_{i}(\pi) &=   J_{i}(\pi) +   V_{i}(H_{i}),
    \\
    \widehat Y_{i+1} &= J_{i+1}(\pi) + W_{i+1}(H_i, H_{i+1}), \label{eqn:temppp}
\end{align}
where $V_{i}(H_{i})$ is some bounded mean-zero noise (dependent on the unbiased estimate made using $H_i$) and $ W_{i+1}(H_i, H_{i+1})$ is also a bounded mean-zero noise (dependent on the unbiased estimate made using $H_i$ and $H_{i+1}$).
Before moving further, we define some additional notation,
\begin{align}
    \bf Y_{3} &\coloneqq [Y_3, ..., Y_{n}]^\top &\bf U_{3} &\coloneqq [U_3(H_2), ..., U_{n}(H_{n-1})]^\top, 
    \\
    \bf \widehat Y_{3} &\coloneqq [\widehat Y_3, ..., \widehat Y_{n}]^\top & \bf V_{2} &\coloneqq [V_2(H_2), ..., V_{n-1}(H_{n-1})]^\top. 
    \\
    \bf \mathbb J_{2} &\coloneqq \left[J_2(\pi), ..., J_{n-1}(\pi) \right]^\top & \bf W_{3} &\coloneqq [W_3(H_2,H_3), ..., W_{n}(H_{n-1}, H_n)]^\top. 
\end{align}

Using \eqref{eqn:SEMs} note that $\bf \widehat Y_{3} = \bf \Lambda_{2} \bf X_{3}$, therefore \eqref{eqn:simpleIV} can be expressed as,
\begin{align}
    \hat \theta_n &= \left(\bf X_{1}^\top \bf X_{2} \right)^{-1} \left(\bf X_{1} ^\top   \bf \widehat Y_{3}\right). \label{eqn:s1}
    \end{align}
    Unrolling value of $\bf \widehat Y_{3}$ in \eqref{eqn:s1} using relations from \eqref{eqn:SEMs} and \eqref{eqn:temppp},
    \begin{align}
    \hat \theta_n &= \left(\bf X_{1}^\top  \bf X_{2} \right)^{-1} \left(\bf X_{1} ^\top   \left(\bf Y_{3} + \bf    W_{3} \right)\right)
        \\
        &= \left(\bf X_{1}^\top  \bf X_{2} \right)^{-1} \left(\bf X_{1} ^\top   \left(\mathbb J_{2}\theta_\pi +  \bf  U_{3} + \bf   W_{3} \right)\right)
        \\
        &= \left(\bf X_{1}^\top  \bf X_{2} \right)^{-1} \left(\bf X_{1} ^\top   \left( \left(\bf X_{2} - \bf   V_{2} \right)\theta_\pi +  \bf  U_{3} + \bf   W_{3} \right)\right). \label{eqn:s2}
    \end{align}
    Expanding \eqref{eqn:s2},
    \begin{align}
        \hat \theta_n &= \theta_\pi  - \left(\bf X_{1}^\top  \bf X_{2} \right)^{-1}\bf X_{1} ^\top \bf   V_{2}\theta_\pi   + \left(\bf X_{1}^\top  \bf X_{2} \right)^{-1} \left(\bf X_{1} ^\top   \left(\bf   U_{3} + \bf   W_{3} \right)\right). \label{eqn:exp}
\end{align}
Evaluating the value of \eqref{eqn:exp} in the limit,
\begin{align}
    \lim_{n \rightarrow \infty} \hat \theta_n =  \theta_\pi - \lim_{n \rightarrow \infty} \left(   \underbrace{\left(\bf X_{1}^\top  \bf X_{2} \right)^{-1}\bf X_{1} ^\top \bf   V_{2}\theta_\pi }_{(a)} + \underbrace{\left(\bf X_{1}^\top  \bf X_{2} \right)^{-1} \left(\bf X_{1} ^\top   \left(\bf   U_{3} + \bf   W_{3} \right)\right)}_{(b)} \right). \label{eqn:lim}
\end{align}
It can be now seen from \eqref{eqn:lim} that if in the limit the terms inside the paranthesis are zero, then we would obtain our desired result. 
Focusing on the term (a) and using the continuous mapping theorem,
\begin{align}
     \lim_{n \rightarrow \infty} \left(\bf X_{1}^\top \bf X_{2} \right)^{-1}\bf X_{1} ^\top \bf   V_{2}\theta_\pi &=  \lim_{n \rightarrow \infty} \left(\frac{1}{n}\bf X_{1}^\top  \bf X_{2} \right)^{-1} \left( \frac{1}{n} \bf X_{1} ^\top \bf   V_{2}\theta_\pi \right)
     \\
     &=   \left(\lim_{n \rightarrow \infty} \frac{1}{n}\bf X_{1}^\top  \bf X_{2} \right)^{-1} \left(\underbrace{\lim_{n \rightarrow \infty} \frac{1}{n} \bf X_{1} ^\top \bf   V_{2}}_{(c)}\right) \theta_\pi , \label{eqn:zero}
\end{align}
%
where \thref{ass:correlated} ensures that $\bf X_1$ and $\bf X_2$ are correlated and thus their dot product is not zero.
Notice that term (c) \eqref{eqn:zero} can be expressed as $\frac{1}{n}\sum_{i=2}^{n-1} X_{i-1}   V_{i}$.
Further, recall from \thref{thm:cov} that $  V_{i}$ is a mean zero random variable uncorrelated with $X_{i-1}$ for all $i$. Further, $V_{i}$ and $X_{i-1}$ are also bounded for all $i$ as both rewards and importance ratios are bounded (\thref{ass:support}), and $T$ is finite.
Now, for $\alpha_i \coloneqq X_{i-1}V_i$ observe that $\mathbb E\brr{\alpha_i} = \mathbb{E} \brr{ X_{i-1}\mathbb E\brr{V_i|X_{i-1}}} = \mathbb E\brr{X_{i-1}0} = 0$ and thus $\alpha_i$ is a bounded and mean zero random variable $\forall i$.
%
Therefore, as $(c)$ is an average of $\alpha$ variables, it follows from the Rajchaman's strong law of large numbers for uncorrelated random variables \citep{rajchman1932zaostrzone,chandra1991extensions} that term under $(c)$ is zero almost surely.
Thus,
\begin{align}
    \left(\bf X_{1}^\top \bf X_{2} \right)^{-1}\bf X_{1} ^\top \bf   V_{2}\theta_\pi \overset{\text{a.s.}}{\longrightarrow} 0.
\end{align}
%

Similarly, for term (b) in \eqref{eqn:lim} observe that both $\bf  U_3$ and $\bf  W_{3}$ are zero mean random variables uncorrelated with $\bf X_{1}$.
Therefore, term (b) in \eqref{eqn:lim} is also zero in the limit almost surely.
It can now be concluded from \eqref{eqn:lim} that
\begin{align}
    \hat \theta_n   \overset{\text{a.s.}}{\longrightarrow} \theta_\pi.
\end{align}
%





\end{proof}

\wconsistencythm*

\begin{proof}
For the linear setting, $\tilde \theta_n$ can be expressed as,
\begin{align}
    \hat \phi_n &\in \argmin_{\phi \in \Phi} \,\, \sum_{i=2}^{n-1}\rho_i \left( \widehat J_{i-1}(\pi) \phi -  G_{i}(\pi) \right)^2. \label{apx:ws1}
    \\
    \tilde \theta_n &\in \argmin_{\theta \in \Theta} \,\, \sum_{i=2}^{n-1}\rho_{i}\rho_{i+1}\left( \bar J_{i}(\pi) \theta -  G_{i+1}(\pi) \right)^2, & \text{where} \quad \bar J_{i} \coloneqq \widehat J_{i-1}\hat \phi_n. \label{apx:ws2}
\end{align}
Notice that as dividing the objective by a positive constant does not change the optima, we drop the denominator terms in $$\bar \rho_i \coloneqq \frac{\rho_{i}\rho_{i+1}}{\sum_{j=2}^{n-1}\rho_j \rho_{j+1}}$$ for the purpose of the analysis.
Before moving further, we introduce some additional notations besides the ones introduced in the proof of \thref{thm:consistent},
\begin{align}
    \bf G_{3} &\coloneqq \left[ G_{3}, ...,   G_{n}\right]^\top 
    & \bf \bar \Lambda_{2} &\coloneqq \texttt{diag}\br{\left[\rho_2\rho_3, \rho_3\rho_4 ..., \rho_{n-1}\rho_{n}\right]},
\end{align}

Solving \eqref{apx:ws1} in matrix form,
    \begin{align}
        \hat \phi_n &=  \left(\bf X_{1}^\top \bf \Lambda_{2} \bf X_{1} \right)^{-1} \bf X_{1}^\top \Lambda_{2} \bf G_{2}. 
        \\
        &=  \left(\bf X_{1}^\top \bf \Lambda_{2} \bf X_{1} \right)^{-1} \bf X_{1}^\top  \bf X_{2}. \label{eqn:wphihat}
    \end{align}
Similarly, solving \eqref{apx:ws2} in matrix form,
    \begin{align}
        \tilde \theta_n &= \left(\bf \bar  X_{2}^\top  \bf \bar \Lambda_{2} \bf \bar X_{2} \right)^{-1} \bf \bar X_{2}^\top  \bf \bar \Lambda_{2} \bf G_{3}.
        \\
        &\overset{(a)}{=} \left(\bf \bar  X_{2}^\top  \bf \bar \Lambda_{2} \bf \bar X_{2} \right)^{-1} \bf \bar X_{2}^\top  \bf \Lambda_{2} \bf X_{3},
        \label{eqn:whattheta}
    \end{align}
    where (a) follows from the fact that $\rho_i\rho_{i+1}G_{i+1} = \rho_i \widehat J_{i+1}(\pi)$.
    Now substituting the value of $\bf \bar X_{2}$ in \eqref{eqn:whattheta} similar to \eqref{eqn:sub1} and \eqref{eqn:trans} in the proof of \thref{thm:consistent},
    \begin{align}
        \tilde \theta_n &= \left(\left(\mul{red}{\bf X_{2}^\top  \bf X_{1}} \left(\mul{blue}{\bf X_{1}^\top \bf \Lambda_{2} \bf X_{1}} \right)^{-1} \mul{green}{\bf X_{1}^\top} \right) \mul{green}{\bf \bar \Lambda_{2}} \left(\mul{green}{\bf X_{1}} \left(\mul{purple}{\bf X_{1}^\top \bf \Lambda_{2} \bf X_{1}} \right)^{-1} \mul{brown}{\bf X_{1}^\top  \bf X_{2}} \right)  \right)^{-1}
        \\
        & \quad \left(\bf X_{2}^\top \bf X_{1} \left(\bf X_{1}^\top \bf X_{1} \right)^{-1} \bf X_{1} ^\top \right) \bf  \Lambda_{2}  \bf X_{3}. \label{eqn:wtinv}
    \end{align}
    Similarly, using matrix operations to expand inverses in \eqref{eqn:wtinv} (colored underlines are used to match the terms before expansion in \eqref{eqn:wtinv} and after expansion in \eqref{eqn:wcut}) and multiplying and dividing by $n$,
    \begin{align}
        \tilde \theta_n &= \left(\mul{brown}{\bf X_{1}^\top  \bf X_{2}} \right)^{-1}  \left(\mul{purple}{\frac{1}{n}\bf X_{1}^\top \bf \Lambda_{2} \bf X_{1}} \right) \left(\frac{1}{n}\mul{green}{\bf X_{1}^\top \bf \bar \Lambda_{2} \bf X_{1}} \right)^{-1} \left(\frac{1}{n}\mul{blue}{\bf X_{1}^\top \bf \Lambda_{2}\bf X_{1} }\right)  
        \\
        & \left(\mul{red}{\bf X_{2}^\top  \bf X_{1}} \right)^{-1}\quad \left(\bf X_{2}^\top  \bf X_{1} \right) \left(\frac{1}{n}\bf X_{1}^\top \bf \Lambda_{2} \bf X_{1} \right)^{-1} \left(\bf X_{1} ^\top  \bf  \Lambda_{2}  \bf X_{3}\right).
        \label{eqn:wcut}
    \end{align}
    Now focusing on the term underlined in green, in the limit,
    %
    \begin{align}
        \lim_{n\rightarrow\infty} \frac{1}{n}\bf X_{1}^\top \bf \bar \Lambda_{2} \bf X_{1} &= \lim_{n\rightarrow\infty} \frac{1}{n}\sum_{i=2}^{n-1} \rho_{i}\rho_{i+1} \widehat J_{i-1}(\pi) \widehat J_{i-1}(\pi) ^\top
        \\
        &\overset{(a)}{=} \lim_{n\rightarrow\infty} \frac{1}{n}\sum_{i=2}^{n-1} \mathbb E_{\beta_i,\beta_{i+1}}\brr{\rho_{i}\rho_{i+1}} \widehat J_{i-1}(\pi) \widehat J_{i-1}(\pi)^\top  + \frac{1}{n}\sum_{i=2}^{n-1}  \varepsilon_{i} \widehat J_{i-1}(\pi) \widehat J_{i-1}(\pi)^\top  
        \\
        &\overset{(b)}{=} \lim_{n\rightarrow\infty} \frac{1}{n}\sum_{i=2}^{n-1}  \widehat J_{i-1}(\pi) \widehat J_{i-1}(\pi) ^\top
        \\
        &= \lim_{n\rightarrow\infty} \frac{1}{n} \bf X_{1}^\top \bf X_{1}, \label{eqn:wcut2}
    \end{align}
    where in (a) we defined random variable $\rho_i\rho_{i+1}$ as its expected value $E_{\beta_i,\beta_{i+1}}\brr{\rho_{i}\rho_{i+1}}$ plus a mean zero noise $\varepsilon_i$. Step (b) follows from first observing that $\rho_i$ and $\rho_{i+1}$ are uncorrelated. Therefore $\mathbb E_{\beta_i,\beta_{i+1}} \brr{\rho_{i}\rho_{i+1}} = \mathbb E_{\beta_i}\brr{\rho_{i}}\mathbb E_{\beta_{i+1}}\brr{\rho_{i+1}} = 1$ as the expected value of importance ratios is $1$ \citep{thomas2015safe}.
    Similarly, $\varepsilon_i$ is uncorrelated with $\widehat J_{i-1}(\pi)$, i.e., the expected value $\mathbb{E}_{\beta_i,\beta_{i+1}}\brr{\varepsilon_i|\widehat J_{i-1}(\pi)}=\mathbb{E}_{\beta_i,\beta_{i+1}}\brr{\varepsilon_i} = 0$ for any given $J_{i-1}(\pi)$.
    (Intuitively, this step can be seen analogous to the derivation of PDIS, where the expected value of future IS ratios is always one, irrespective of the past events that it has been conditioned on).
    Now notice that the random variable $\zeta_i \coloneqq  \varepsilon_{i} \widehat J_{i-1}(\pi) \widehat J_{i-1}(\pi)^\top $ is bounded and has mean zero for all $i$.
    Therefore, while $\zeta_i$ and $\zeta_j$ may be dependent, they are uncorrelated for all $i\neq j$.
    Using strong law of large number for uncorrelated random variables \citep{rajchman1932zaostrzone,chandra1991extensions} the second term in (a) is zero almost surely.

    Similarly, it can be observed that  $\frac{1}{n}\bf X_{1}^\top \bf \Lambda_{2}  \bf X_{1}$ converges to $\frac{1}{n}\bf X_{1}^\top  \bf X_{1}$.
    Therefore using \eqref{eqn:wcut2} in \eqref{eqn:wcut}, and using the continuous mapping theorem,
    \begin{align}
        \tilde \theta_n &\overset{a.s.}{\longrightarrow} \left(\mul{brown}{\bf X_{1}^\top  \bf X_{2}} \right)^{-1}  \left(\mul{purple}{\frac{1}{n}\bf X_{1}^\top  \bf X_{1}} \right) \left(\frac{1}{n}\mul{green}{\bf X_{1}^\top\bf X_{1}} \right)^{-1} \left(\frac{1}{n}\mul{blue}{\bf X_{1}^\top \bf X_{1} }\right)  \left(\mul{red}{\bf X_{2}^\top  \bf X_{1}} \right)^{-1}
        \\
        & \quad \left(\bf X_{2}^\top  \bf X_{1} \right) \left(\frac{1}{n} \bf X_{1}^\top \bf X_{1} \right)^{-1} \left(\bf X_{1} ^\top  \bf  \Lambda_{2}  \bf X_{3}\right).
        \label{eqn:wwcut}
    \end{align}
    Notice that \thref{ass:correlated} ensures that $\bf X_1$ and $\bf X_2$ are correlated and thus their dot product is not zero. Further, several terms in \eqref{eqn:wwcut} cancel each other out, therefore,
    \begin{align}
        \tilde \theta_n 
        &\overset{a.s.}{\longrightarrow} \left(\bf X_{1}^\top \bf X_{2} \right)^{-1} \left(\bf X_{1} ^\top  \bf  \Lambda_{2}  \bf X_{3}\right).  \label{eqn:wsimpleIV}
    \end{align}
    Now proof can be completed similarly to the part (b) of the proof of \thref{thm:consistent}.
\end{proof}




\section{Empirical Details}

\label{apx:arp}

The code for all the algorithms and experiments can be found here \href{https://github.com/yashchandak/activeNS}{https://github.com/yashchandak/activeNS}


\subsection{Algorithm}

\label{apx:algo}

In Section \ref{sec:MFPE} we  established the key insight for how to forecast the next performance based on a single previous performance, when the true performance trend of a policy can be modeled auto-regressively using a single past term.
However, as noted in \thref{rem:p} and Figure \ref{fig:sine} using more terms can provide more flexibility in the the type of trends that can be modeled. 
Therefore, we leverage statistics based on multiple past terms to form the instrument variable $Z_i$.

One immediate choice for $Z_i$ is $\widehat J_i(\pi)$.
However, we found that the high variance of IS estimate makes $\widehat J_i(\pi)$ a weak instrument variable \citep{pearl2000models}, that is not strongly correlated with $J_{i+1}(\pi)$.
Better choices of $Z_i$ may be the ones that are strongly correlated with $J_{i+1}(\pi)$ but uncorrelated with the noise in the $\widehat J_{i+1}(\pi)$ estimate.
We found that an alternate choice of $Z_i$ composed of the unweighted return $G_i$ and a WIS-like estimate for $J_i(\pi)$ (where the normalization is done only using the importance ratios from episodes before $i$) to be more useful. Specifically, we let $Z_i \coloneqq [G_i, \widetilde J_i(\pi)]$, where
\begin{align}
    \widetilde J_i(\pi) \coloneqq \frac{\rho_i G_i}{\frac{n}{i}\sum_{k=1}^{i} \rho_k}.
\end{align}

It can be observed similar to \thref{thm:cov} that this $Z_i$ is uncorrelated with the noise in $\widehat J_{i+1}(\pi)$ as well.
Further, the weighted version $\widetilde J_i(\pi)$ suffers less from variance and we found it to be more strongly correlated with $J_{i+1}(\pi)$.
Further, often the performance of the behavior policy is positively/negatively correlated with the performance of the evaluation policy and thus $G_i$ tends to be correlated with $J_{i+1}(\pi)$ as well. 
%
%
%
One could also explore other potential IVs; we leave this for future work.

Now using past $p$ values of $Z_i$ to form the complete instrument variable, where $p$ is a hyper-parameter, we use the following importance weighted instrument-variable regression,
\begin{align}
    \tilde \varphi_n &\in \argmin_{\varphi \in \Omega} \,\, \sum_{i={p+1}}^{n} \bar \rho_i \left( g\left(\left(Z_{j}(\pi)\right)_{j=i-p}^{i-1}; \varphi\right) -  G_{i}(\pi) \right)^2
    \label{eqn:apx:gaa}, 
    \\
    \tilde \theta_n &\in \argmin_{\theta \in \Theta}  \sum_{i=2p}^{n-1} \rho_i^\dagger \left( f\left( \left(\bar J_j(\pi) \right)_{j=i-p+1}^{i}; \theta\right) -  G_{i+1}(\pi) \right)^2,\label{eqn:apx:s2aa}
    \end{align}
where,
    \begin{align}
    \bar J_{i}(\pi) = &g\left( \left(Z_{j}(\pi)\right)_{j=i-p}^{i-1}; \tilde \varphi_n\right), \quad \forall p<i\leq n,
    \\
    \bar \rho_i \coloneqq& \frac{\rho_{i}}{(\sum_{j=2}^{n}\rho_j)}
    \quad 
    \\
    \rho_i^\dagger \coloneqq& \frac{\rho_{i}\rho_{i+1}}{(\sum_{j=2}^{n-1}\rho_j \rho_{j+1})}.
\end{align}
Once $\tilde \theta_n$ is obtained, we use it to auto-regressively forecast the future performances.
Particularly, we use $(\bar J_k)_{k=n+1}^{n+L}$ as the predicted performances for the next $L$ episodes, where
\begin{align}
\forall i>n, \,    \bar J_i \coloneqq f\left( \left(\bar J_{i-k}(\pi) \right)_{k=1}^{p}; \tilde \theta_n\right).
\end{align}

While our theoretical results were established for the setting where there is only a single regressor ($p=1$), a more generalized theoretical result for $p>1$ may be possible using the concepts of endogenous and exogenous regressors. Particularly, let $[..., X_{i}, X_{i+1}, X_{i+2}, X_{i+3}, ...]$, be observations from an $AR(2)$ time-series sequence where $X_{i+3}$ depends on $X_{i+1}$ and $X_{i+2}$. Here, using $X_{i+1}$ as the only instrument variable for $X_{i+2}$ is not possible as $X_{i+3}$ is correlated with $X_{i+1}$. However, $Z = X_i$ or even $Z= [X_{i}, X_{i+1}]$ may form a valid instrument for $X_{i+2}$ as neither the noise in $X_{i+3}$ nor the noise in $X_{i+2}$ is correlated with at least one component of $Z$, i.e., $X_{i}$.
For precise instrument relevance conditions and additional discussion, we refer the reader to the works by \citet{abbottIV,cameronIV,parkerIV}. We leave this theoretical extension for the future work.


\subsection{Implementation and Hyper-parameters}


For the Pro-WLS baseline, we use the weighted least-squares procedure using the Fourier basis features \citep{chandak2020optimizing}.
The hyper-parameter for this baseline is the number of Fourier terms $d$ that should be used to estimate the performance trend.
We found that setting $d$ to be too high results in extremely high-variance and setting it to a lower value fails to capture the trend in performance.
Therefore, based on ablation studies in Figure \ref{fig:ablation} we set $d=5$ for all the experiments.

WIS estimator uses all the data form the past. In comparison, for sliding windows WIS (SWIS), we set the sliding window length to be $400$ ($20\%$ of the number of episodes in the data) for all the experiments. That is, SWIS use past $400$ episodes to estimate the future performance.

For OPEN, the hyper-parameter corresponds to the number of terms to condition on during auto-regression.
Similar to SWIS, we set $p=400$ ($20\%$ of the number of episodes in the data) for all the experiments. That is the AR estimator uses past $400$ episodes to predict the performance in the next episode.
For the two stage regression, we observed that choice of learning rate, and avoiding over-fitting (using early-stopping) to be important as well.

For each environment, we collect data consisting of $2000$ episodes of interaction using the behavior policy, and predict the expected future returns if executing the evaluation policy for the next $200$ episodes.
The behavior policy and the evaluation policy for each domain are described in Section \ref{sec:envdet}.

Since the future outcomes are stochastic, to evaluate the true expected future performance in \eqref{eqn:obj}, we create digital-clones of the environment \textit{after} data has been collected using the behavior policy.
Using these clones, we compute the average of $30$ possible futures when executing the evaluation policy.
This estimate of the \textit{expected} future returns are then used as the ground truth for comparison with the predictions made by the algorithms.

For Figure \ref{fig:active_plots_bias}, $|\text{bias}|$ was computed using the absolute value of the difference between  (a) the predicted future performance averaged across 30 trials and (b) the ground truth future performance. That is, for an estimator $\hat J$ of $J$, the bias is  $|J - E[\hat J]|$. Because of this, $30$ trials only gives us a point estimate for bias. 
(Notice that using the absolute value of the difference between  (a) the predicted future performance for each trial and (b) the true future performance’, averaged across $30$ trials, will provide an estimate of $E[|J - \hat J|]$, which would not capture the bias but will be more like the variance (using L1/absolute distance instead of L2)).

\subsection{Environments}
\label{sec:envdet}
We provide empirical results on four non-stationary environments: a toy robot environment, non-stationary mountain car, diabetes
treatment, and  MEDEVAC domain for routing air ambulances.
Details for each of these
environments are provided in this section.
For all of the above environments, we
regulate the `speed' of non-stationarity to characterize an algorithms’ ability to adapt.
Higher speed corresponds to a faster rate of non-stationarity; A speed of zero
indicates that the environment is stationary.

\paragraph{RoboToy: }
This domain corresponds to the toy robot scenario depicted in Figure \ref{fig:badpassive}.
Here, a robot can accomplish a task using either by `running' or `walking'.
Robot finishes a task faster when `running' than `walking' and thus the reward received at the end of 'running' is higher.
However, `running' causes more wear and tear on the robot, thereby degrading the performance of both `running' or `walking' in the future.
Since the past interactions influence the non-stationarity, this is an instance of active non-stationarity.
%

To perform more ablations on our algorithms, we also simulated a \textbf{RoboToy-Passive} domain, where there is no active non-stationarity as above. Instead, the reward obtained at the end of executing the options `walking' or `running' fluctuate across episodes.
Therefore, the changes to the underlying system are independent of the actions taken by the agent in the past.

For both the active and passive version of this domain, we collect data using a behavior policy that chooses `walking' more frequently, and the evaluation policy is designed such that it chooses `running' more frequently.

\paragraph{Non-stationary Mountain Car: }
 In real-world mechanical systems, motors undergo wear and tear over time based on how vigorously they have been used in the past.
 To simulate similar performance degradation, we adapt the classic (stationary) mountain car domain \citep{moore1990efficient}.
  We modify the domain such that at every episode the effective acceleration force is decayed proportional to the average velocity of the car in the previous episode.
 This results in active non-stationarity as the change in the system is based on the actions taken by the agent in the past.
 Similar to the works by \citep{thomas2015safe,jiang2015doubly}, we make use of macro-actions to repeat an action 10 times, which helps in reducing the effective horizon length of each episode.
 The maximum number of step per episode using these macros is 30.
 
For our experiments, using an actor-critic algorithm \citep{SuttonBarto2} we find a near-optimal policy $\pi$ on the stationary version of the mountain car domain, which we use as the evaluation policy.
Let $\pi^\texttt{rand}$ be a random policy with uniform distribution over the actions.
Then we define the behavior policy $\beta(o,a) \coloneqq 0.5 \pi(o,a) + 0.5 \pi^\texttt{rand}(o,a)$ for all states and actions.

\paragraph{Type-1 Diabetes Management: }
Automated healthcare systems that aim to personalise for individual patients should account for the physiological changes of the patient over time.
To simulate such a scenario we use an open-source implementation \citep{simglucose} of the U.S. Food and Drug Administration (FDA) approved Type-1 Diabetes Mellitus simulator (T1DMS) \citep{man2014uva} for the treatment of Type-1 diabetes,
where we induced non-stationarity by oscillating the body parameters (e.g., rate of glucose absorption, insulin sensitivity, etc.) between two known configurations available in the simulator. 
This induces passive non-stationarity, that is, changes are not dependent on past actions.

Each step of an episode corresponds to a minute ($1440$ timesteps--one for each minute in a day) in an \textit{in-silico} patient's body and state transitions are governed by a continuous time non-linear ordinary differential equation (ODE) \citep{man2014uva}.
This makes the problem particularly challenging as it is unclear how the performance trends of policies vary in this domain when the physiological parameters of the patient are changed. 
Notice that as the parameters that are being oscillated are inputs to a non-linear ODE system, the exact trend of performance for any policy is unknown.
This more closely reflects a real-world setting where \thref{ass:fixedf} might not hold, as every policy's performance trend in real-world problems cannot be expected to follow \textit{any} specific trend \textit{exactly}--one can only hope to obtain a coarse approximation of the trend.

For our experiments, using an actor-critic algorithm \citep{SuttonBarto2} we find a near-optimal policy $\pi$ on the stationary version of this domain, which we use as the evaluation policy.
The policy learns the CR and CF parameters of a basal-bolus controller.
Let $\pi^\texttt{rand}$ be a random policy with uniform distribution over actions.
Then we define the behavior policy $\beta(o,a) \coloneqq 0.5 \pi(o,a) + 0.5 \pi^\texttt{rand}(o,a)$ for all states and actions.

\paragraph{MEDEVAC: } This domain stands for \textit{med}ical \textit{evac}uation using air ambulances. This domain was developed by \citet{robbins2020approximate} for optimally routing air ambulances to provide medical assistance in regions of conflict.
This domain divides the region of conflict into $34$ mutually exclusive zones, and has $4$ air ambulances to serve all zones when an event occurs.
Based on real-data, this domain simulates the arrival of different events, from different zones, where each event can have $3$ different priority levels.
Serving higher priority events yields higher rewards.
If an ambulance is assigned to an event, it will finish the assignment in a time dependent on the distance between the base of the ambulance and the zone of the corresponding event.
While engaged in an assignment, that ambulance is no longer available to serve other events.
A good controller decides whether to deploy, and which MEDEVAC to deploy, to serve any event (at the risk of not being able to serve a new high-priority event if all ambulances become occupied).

The original implementation of the domain assumes that the arrival rates of the events and the time taken by an ambulance to complete an event follow a Poisson process with a fixed rate.
However, in reality, the arrival rates of different events can change based on external incidents during conflict. Similarly, the completion rate can also change based on how frequently an ambulance is deployed.
To simulate such non-stationarity, we oscillate the arrival rate of the incoming high-priority events, which induces passive non-stationarity.
Further, to induce wear and tear, we slowly decay the rate at which an ambulance can finish an assignment. This decay is proportional to how frequently the ambulance was used in the past.
This induces active non-stationarity. 
The presence of both active and passive changes makes this domain subject to hybrid non-stationarity.

Similar to other domains, we used an actor-critic algorithm \citep{SuttonBarto2} we find a near-optimal policy $\pi$ on the stationary version of this domain, which we use as the evaluation policy.
Let $\pi^\texttt{rand}$ be a random policy with uniform distribution over the actions.
Then we define the behavior policy $\beta(o,a) \coloneqq 0.5 \pi(o,a) + 0.5 \pi^\texttt{rand}(o,a)$ for all states and actions.

\subsection{Additional Results}

\label{sec:passdisc}
While the primary focus of this chapter was to develop methods to handle active/hybrid non-stationarity, we observed that the proposed method OPEN also provides benefits over the earlier algorithm Pro-WLS even when it is known that there is only passive non-stationarity in the environment.

\subsubsection{Single Run}
Similar to Figure \ref{fig:activestep}, in Figure \ref{fig:passivestep}  we present a step by step breakdown of the intermediate stages of a single run of OPEN on the RoboToy-Passive domain.
Here the trend in how the performance of the evaluation policy was changing in the past remains the same in the future.
When only passive non-stationarity is present, the double counter-factual correction performed by OPEN is superfluous.
However,  it can be observed that OPEN can still correctly identify the trend and provide useful predictions of $\pi$'s future performance.

\begin{figure}
    \centering
    
    \begin{minipage}{.4\textwidth}
    \centering
     \includegraphics[width=\textwidth]{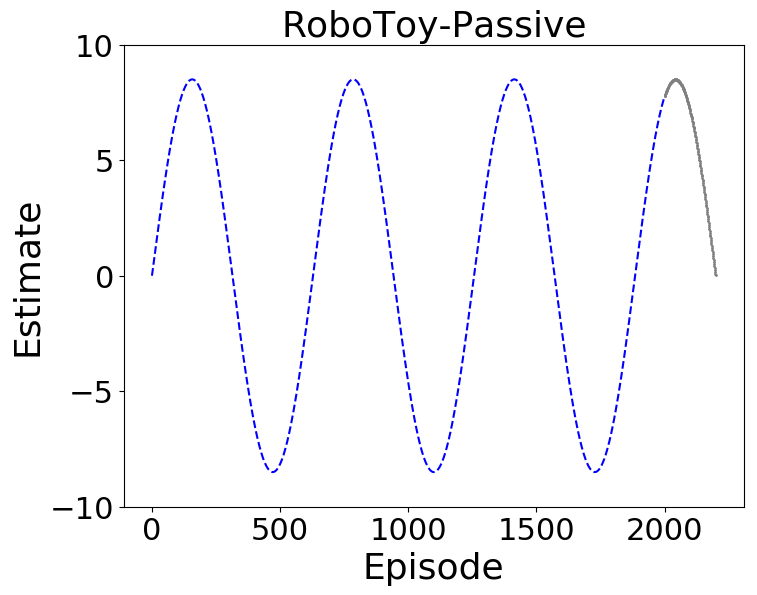}
    \end{minipage}
    \begin{minipage}{.58\textwidth}
    { The blue curve corresponds to the performances $J_i(\pi)$ for the past episodes.
   As there is no active non-stationarity, the choice of actions executed does not impact the underlying non-stationarity.
   Therefore, $J_i(\pi)$ follows the same trend in future as it did in the past.
   The blue and gray curves are unknown to the algorithm.
    }
      \end{minipage}
      \\
    \begin{minipage}{.4\textwidth}
    \centering
     \includegraphics[width=\textwidth]{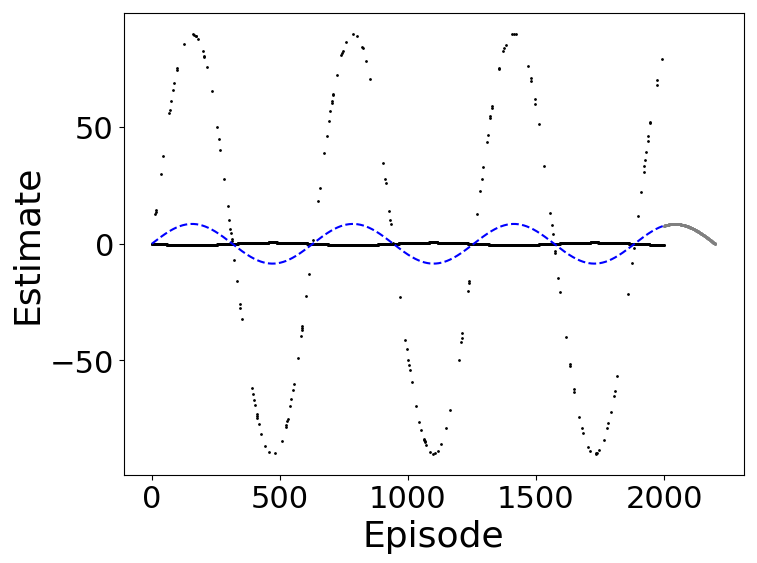}
    \end{minipage}
    \begin{minipage}{.58\textwidth}
    { OPEN first uses historical data to obtain counterfactual estimates of $J_i(\pi)$ for the past episodes.
    One can see the high-variance in these estimates (notice the change in the y-scale) due to the use of importance sampling.
    }
      \end{minipage}
      \\
    \begin{minipage}{.4\textwidth}
    \centering
     \includegraphics[width=\textwidth]{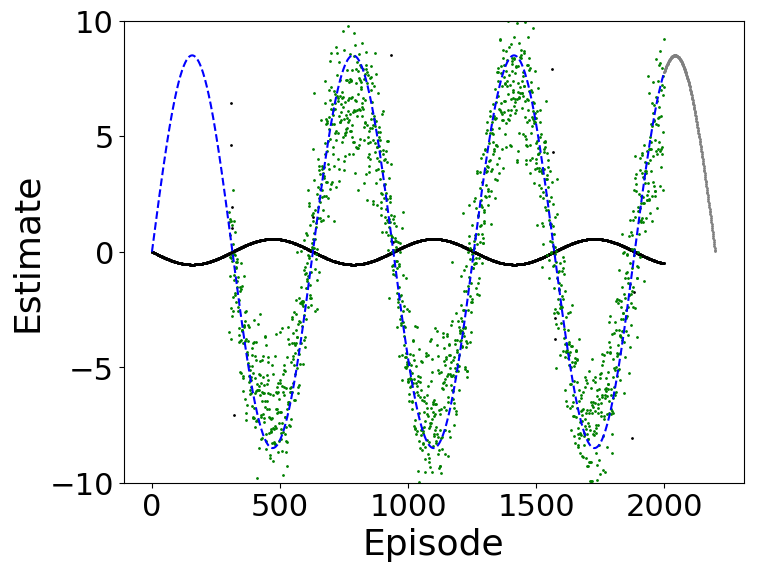}
    \end{minipage}
    \begin{minipage}{.58\textwidth}
    { Before naively auto-regressing, OPEN first aims to denoise the past performance estimates using the first stage of instrument variable regression. Since $p=300$, the first $300$ terms were not denoised. It can be observed that OPEN successfully denoises the importance sampling estimates.
    }
      \end{minipage}
      \\
    \begin{minipage}{.4\textwidth}
    \centering
     \includegraphics[width=\textwidth]{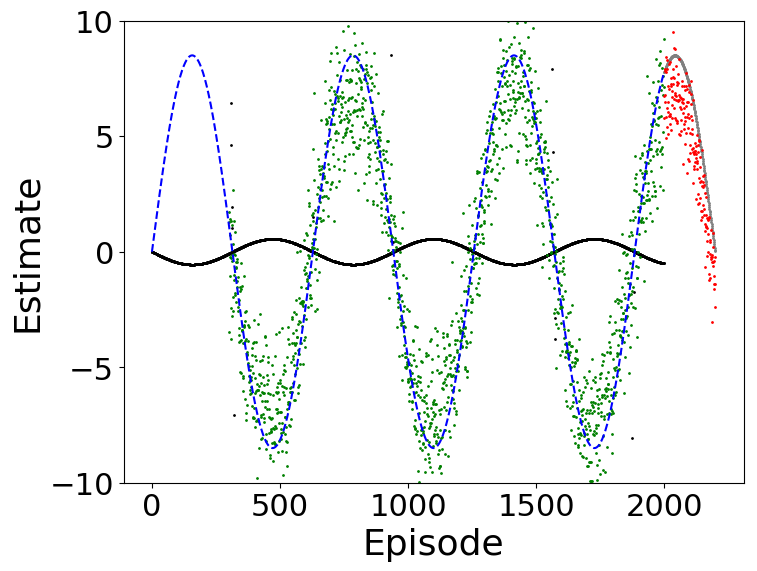}
    \end{minipage}
    \begin{minipage}{.58\textwidth}
    {  Using the denoised estimates of past performances, with the second use of counterfactual reasoning, OPEN performs the second stage of regression to forecast the future performance when $\pi$ will be deployed.
    In the passive setting, use of double-counterfactual is superfluous but OPEN is still able to correctly predict the future performance.
    }
      \end{minipage}
      \includegraphics[width=0.7\textwidth]{images/demo_legend.png}
      \caption{An illustrative step by step breakdown of the stages in the proposed algorithm OPEN for the RoboToy-Passive domain.}
    \label{fig:passivestep}     
\end{figure}

\subsubsection{Summary Plots}

In Figure \ref{fig:passive_plots} we provide bias and MSE analysis of different algorithms on the domains that exhibit passive non-stationarity.
Except for the stationary setting, where WIS has the best performance overall, we observe that for all other settings in the plot, OPEN performs better than both Pro-WLS and WIS consistently.

One thing that particularly stands out in these plots is the poor performance of Pro-WLS, despite being designed for the passive setting.
We observed that because of the choice of parametric regression using the Fourier basis, Pro-WLS tends to suffer from high bias when the number of Fourier terms is not sufficient to model the underlying trend.
Also, if the number of Fourier terms is increased naively, then they overfit the data and extrapolate poorly, thereby resulting in high-variance.
In contrast, our method is based on an auto-regressive based time-series forecast that is more robust to the model choice (we kept the number of lag terms for auto-regression as $p=300$ for OPEN for all our experiments).

To obtain all the results for Figure \ref{fig:active_plots_bias} and Figure \ref{fig:passive_plots},
in total 30 different seeds were used for each speed of each domain for each algorithm to get the standard error.
The authors had shared access to a computing cluster, consisting of 50 compute nodes with 28 cores each,
which was used to run all the experiments.

\begin{figure}
    \centering
    \includegraphics[width=0.35\textwidth]{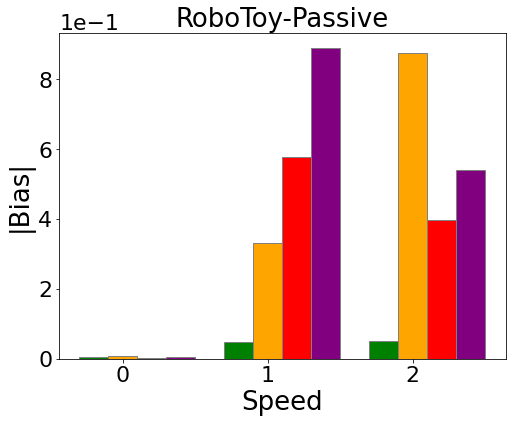}
    \includegraphics[width=0.35\textwidth]{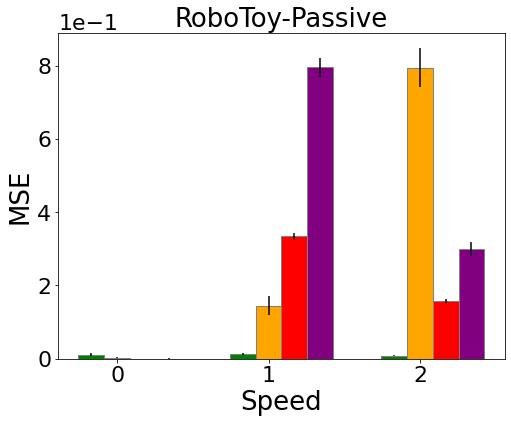}
    \\
    \includegraphics[width=0.6\textwidth]{images/MSE_legend.png}
    \caption{Comparison of different algorithms for predicting the future performance of evaluation policy $\pi$ on domains that exhibit passive non-stationarity.
      On the x-axis is the speed, which corresponds to the rate of non-stationarity; higher speed indicates a faster rate of change and a speed of zero indicates a stationary domain. 
      \textbf{(TOP)} On the y-axis is the absolute bias in the performance estimate.       \textbf{(Bottom)} On the y-axis is the mean squared error (MSE) in the performance estimate. \textbf{Lower is better} for all of these plots.
      For each domain, for each speed, for each algorithm, 30 trials were executed.
      Discussion of these plots can be found in Section \ref{sec:passdisc}.}
    \label{fig:passive_plots}
\end{figure}

\subsection{Ablation Study}
In this section we study the sensitivity to hyper-parameters for the proposed method OPEN and the baseline method Pro-WLS \citep{chandak2020optimizing}.
The hyper-parameter for OPEN corresponds to the number of past terms to condition on for auto-regression, as discussed in \thref{rem:p}.
The hyper-parameter for Pro-WLS corresponds to the order of Fourier bases required for parametric regression.
In Figure \ref{fig:ablation} we present the results for how the performance of the methods vary for different choices of hyper-parameters.

\label{apx:ablation}

\begin{figure}
    \centering
    \includegraphics[width=0.32\textwidth]{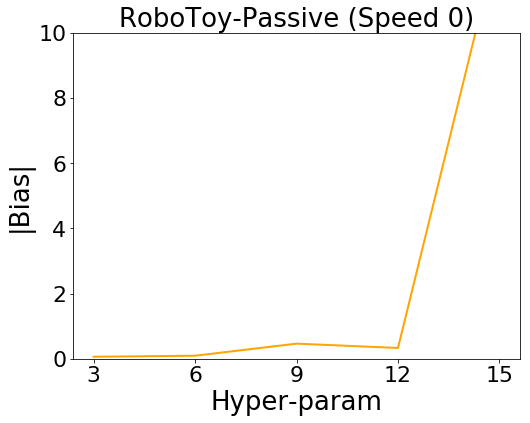}
    \includegraphics[width=0.32\textwidth]{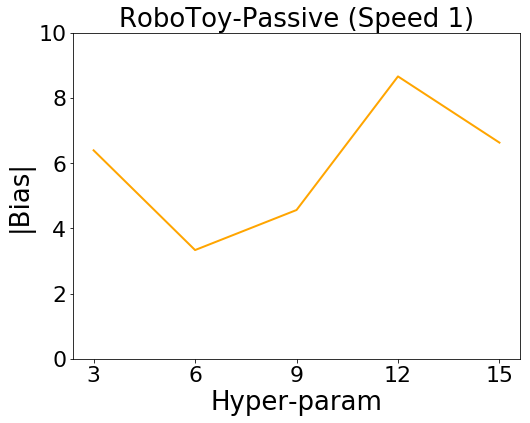}
    \includegraphics[width=0.32\textwidth]{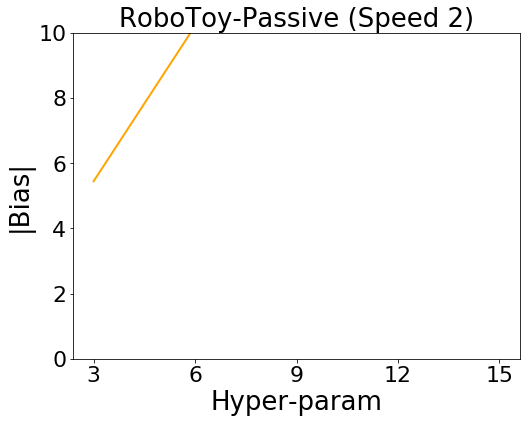}
    \\
    \includegraphics[width=0.32\textwidth]{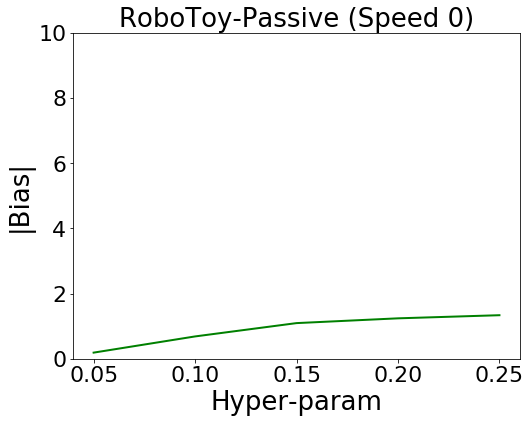}
    \includegraphics[width=0.32\textwidth]{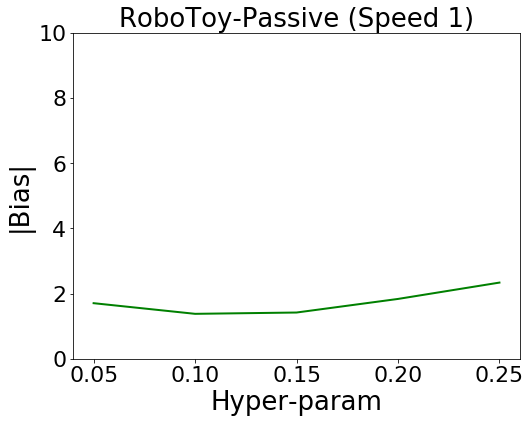}
    \includegraphics[width=0.32\textwidth]{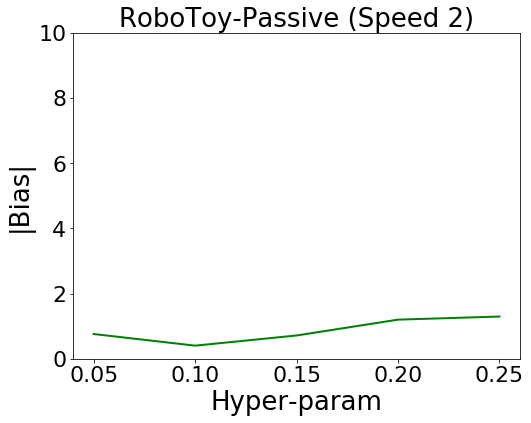}
    \caption{\textbf{(Top)} Absolute bias in prediction of Pro-WLS for different choices of its hyper-parameter. \textbf{(Bottom)}  Absolute bias in prediction of OPEN for different choices of its hyper-parameter. For all the plots, lower value is better.
    Overall, we observe that OPEN being an auto-regressive method can extrapolate/forecast better and is thus more robust to hyper-parameters than Pro-WLS that uses Fourier bases for regression and is not as good for extrapolation.}
    \label{fig:ablation}
\end{figure}

%


\end{document}